\DeclareFixedFont{\ttb}{T1}{txtt}{bx}{n}{10} 
\DeclareFixedFont{\ttm}{T1}{txtt}{m}{n}{10}  
\definecolor{deepblue}{rgb}{0,0,0.5}
\definecolor{deepred}{rgb}{0.8,0,0}
\definecolor{deepgreen}{rgb}{0,0.5,0}
\definecolor{grey1}{rgb}{0.5,0.5,0.5}
\newcommand\pythonstyle{\lstset{
		language=Python,
		basicstyle=\small\ttm,
		otherkeywords={self},             
		keywordstyle=\small\ttb\color{deepblue},
		emph={},          
		emphstyle=\small\color{deepred},    
		stringstyle=\small\color{deepgreen},
		commentstyle=\small\color{grey1}\ttm,
		frame=tb,                         
		showstringspaces=false,            %
		breaklines=true,
		tabsize=2
}}
\newcommand\pythoninline[1]{{\pythonstyle\lstinline!#1!}}
\title{Efficient Sketches for Training Data Attribution and Studying the Loss Landscape}
\author{%
  Andrea Schioppa \\
  Google DeepMind\\
  Amsterdam, the Netherlands\\
  \texttt{arischioppa@google.com} \\
}
\def\real#1.{{\mathbb R}^{#1}}
\newcommand{\sketch}{\mathcal S}
\newcommand{\fastfood}{\textbf{FFD}}
\newcommand{\ailonchazelle}{\textbf{FJL}}
\newcommand{\tfastfood}{\textbf{FFD}}
\newcommand{\bami}{\textbf{AFFD}}
\newcommand{\frikadel}{\textbf{AFJL}}
\newcommand{\straight}{\textbf{QK}}
\newcommand\soutpars[1]{\let\helpcmd\sout\parhelp#1\par\relax\relax}
\long\def\parhelp#1\par#2\relax{%
  \helpcmd{#1}\ifx\relax#2\else\par\parhelp#2\relax\fi%
}
\theoremstyle{plain}
\newtheorem{theorem}{Theorem}[section]
\theoremstyle{definition}
\theoremstyle{remark}
\begin{document}

\maketitle

\begin{abstract}
The study of modern machine learning models often necessitates storing vast quantities of gradients or Hessian vector products (HVPs). Traditional sketching methods struggle to scale under these memory constraints.  We present a novel  framework for scalable gradient and HVP sketching, tailored for modern hardware.  We provide theoretical guarantees and demonstrate the power of our methods in applications like training data attribution, Hessian spectrum analysis, and intrinsic dimension computation for pre-trained language models. Our work sheds new light on the behavior of pre-trained language models, challenging assumptions about their intrinsic dimensionality and Hessian properties.
\end{abstract}






\addtocontents{toc}{\protect\setcounter{tocdepth}{0}}
\section{Introduction}

\textbf{Overview} In this work, we investigate gradient and Hessian vector product (HVP) sketching to address the memory constraints inherent in applications that require the storage of numerous such vectors. Examples of such applications include training data attribution (TDA), eigenvalue estimation, and the computation of the intrinsic dimension. While previous studies on the intrinsic dimension have employed the Fastfood Transform to mitigate memory demands, we demonstrate its theoretical limitations for sketching and its persistent memory bottlenecks on modern accelerators. To resolve these issues, we propose novel sketching algorithms designed for modern hardware and underpinned by robust theoretical guarantees. Our experiments on pre-trained language models demonstrate the scalability of our methods while offering new perspectives on the intrinsic dimension and 
the Hessian of generative language models.

\textbf{Motivation} Training data attribution (TDA)~\cite{pruthi-tracin-20, koh-liang-if-2017} and Hessian
eigenvalue estimation~\cite{behrooz-eigens} offer powerful insights into neural network behavior.
TDA requires storing vectors of the same dimensionality $N$ as the network's parameters for each training point, scaling linearly with dataset size ($O(NT)$ for $T$ training points). Similarly, numerically stable Hessian eigenvalue estimation algorithms demand repeated Hessian-vector product (HVP) computations and storage, scaling with network size and iterations ($O(NT)$ for $T$ iterations).
These memory bottlenecks hinder the study of large-scale models; to address this, sketching~\cite{numsketch-survey-Woodruff2014-nn} provides a compelling solution. By projecting gradients or HVPs into lower-dimensional random subspaces, sketching preserves their essential geometric properties while drastically reducing memory requirements. However, in TDA sketching has been employed with limited effectiveness.  Random projections have been carried out with dense matrices,
that introduce significant scaling constraints ($O(ND)$ memory for a target dimension $D$) which necessitate the restriction of gradients to a subset of layers.  Current TDA scaling methods require layer selection, as demonstrated with BERT in~\cite{fastif-guo-etal-2021}, where only 15M parameters were used out of 110M.
Beyond the computational cost of dense matrices, our experiments show that layer selection introduces substantial distortion in the estimation of TDA scores (Sec.~\ref{sec:layer_selection_limitations}).  Recent work on neural network geometry~\cite{intrinsic_dim_2018} suggests using the Fastfood Transform for gradient sketching (abbr. \fastfood\ \cite{qle_fastfood13}) with $O(N)$ memory; however,
as the Fastfood Transform is a random feature generation algorithm, it necessitates indirect application in order
to sketch gradients. This has both theoretical and practical consequences. On the one hand, our theoretical analysis reveals that  \fastfood\ fails to fully satisfy the algorithmic requirements of sketching (Thm.~\ref{thm:fastfood_bad_inputs}).  On the other hand, our experiments (Tab.~\ref{tab:lookup_removal}) demonstrate that \fastfood\ exhibits unacceptable run-time performance on TPUs. These limitations, both theoretical and practical, underscore the need for novel sketching algorithms. To this end, we investigate new sketching paradigms optimized for modern accelerators like GPUs and TPUs. Our work makes the following contributions:
\begin{enumerate}
\item \emph{Scalable Gradient Sketching for Modern Accelerators and Networks}: We introduce algorithms (\bami, \frikadel, \straight) (Sec.~\ref{sec:design_space}) designed to overcome performance limitations of existing sketching techniques with modern neural networks on architectures like GPUs and TPUs. Our design analysis provides further insights into the effectiveness of our approach.

\item \emph{Robust Theoretical Foundations}: We establish theoretical guarantees for \bami\ and \straight\ (Thms.~\ref{thm:bami_theoretical_guaranteed}, \ref{thm:qk_theoretical_guaranteed}) for sketching and demonstrate limitations in the theoretical basis of the Fastfood Transform (Thm.~\ref{thm:fastfood_bad_inputs}). Our analysis further indicates a dimensionality reduction advantage for \bami\ over \straight, a finding supported by our TDA experimental results (Sec.~\ref{sec:exp_design_choices}).

\item \emph{Algorithmic Improvements}: We propose more efficient algorithms for the intrinsic dimension estimation and Hessian eigenvalue computation (Sec.~\ref{sec:applications}).
\end{enumerate}

We demonstrate how our methods enable large-scale applications in training data attribution, intrinsic dimension computation, and Hessian spectra analysis with pre-trained language models. This leads to the following insights that advance the understanding of pre-trained language models:

\begin{enumerate}
\item \emph{Limitations of Layer Selection}: We demonstrate that layer selection methods yield inaccurate influence score estimations in training data attribution, so their usage should be avoided (Sec.~\ref{sec:layer_selection_limitations}).

\item \emph{High Intrinsic Dimension}: In contrast to assumptions drawn from classification-based studies, we demonstrate that the intrinsic dimension of LLMs can approach their full parameter count (Sec.~\ref{sec:exp_intrinsic_dimension}). This challenges prevailing beliefs about the intrinsic dimensionality of these models~\cite{intrinsic_dim_2018, intlm-Aghajanyan2021-wc, lotfi2022pacbayes}.

\item \emph{LLM Hessian Spectra}:  Our analysis shows distinct characteristics of LLM Hessian spectra (Sec.~\ref{subsec:eigen_evol}), contrasting with conjectures based on findings related to smaller networks~\cite{tiny_gurari2018gradient, sagun2018empirical, sgd-outlier-alignment-theoretical, behrooz-eigens}.
\end{enumerate}

\textbf{Paper organization}
Sec.~\ref{sec:related_work} provides a survey of relevant research in the field, contextualizing our contributions. Sec.~\ref{sec:design_space} introduces our novel sketching algorithms. We begin with necessary background material, analyze design choices, and provide a step-by-step implementation tutorial in Appendix~\ref{appx:impl-details}. Sec.~\ref{sec:applications} outlines our proposed techniques for efficient intrinsic dimension search and Hessian eigenvalue computation.
Sec.~\ref{sec:experiments} describes our experimental setup: subsections are aligned with Sections~\ref{sec:design_space} and~\ref{sec:applications} to enhance the connection between theory and empirical results

\section{Related Work}
\label{sec:related_work}
\paragraph{Sketching}
Sketching algorithms have been extensively studied (see surveys~\cite{numsketch-survey-Woodruff2014-nn, sketching-survey-mahoney, Liu2020-cx}). Our algorithms, \bami\ and \frikadel, draw inspiration from the seminal \ailonchazelle\ algorithm~\cite{Ailon_undated-cc} and the \fastfood\ approach~\cite{qle_fastfood13}. However, these techniques were designed before the era of modern accelerators like GPUs and TPUs.  Therefore, our work revisits their design, optimizing them for modern neural networks and hardware. Theoretically, while \cite{Ailon_undated-cc} established \ailonchazelle\ as a sketching algorithm, their proof relies on independence assumptions that do not hold in our setting.  To address this, we employ more sophisticated concentration tools tailored to bi-linear forms (Thm.~\ref{thm:bami_theoretical_guaranteed}) and the special orthogonal group (Thm.~\ref{thm:qk_theoretical_guaranteed}). Recent work on PAC bounds~\cite{lotfi2022pacbayes} leveraged Kronecker-product decompositions to accelerate gradient sketching when computing intrinsic dimensionality. Our \straight\ algorithm extends the concepts introduced in~\cite{lotfi2022pacbayes}.  Importantly, we provide a proof that \straight\ is a sketching algorithm, absent in~\cite{lotfi2022pacbayes}. Additionally, we demonstrate that the Kronecker structure used in~\cite{lotfi2022pacbayes} is not essential for performance gains, highlighting that the true bottleneck in \fastfood\ and \ailonchazelle\ is memory access. For a comprehensive comparison with~\cite{lotfi2022pacbayes}, please refer to Appendix~\ref{appx:theory}. Finally, to support our eigenvalue estimation, we leverage the theoretical guarantees outlined in~\cite{sketch-eigenvalues-Swartworth2023-by}.

\paragraph{Intrinsic dimension} 
The concept of intrinsic dimension (ID) offers a valuable metric for understanding the complexity of learning tasks. Originally employed to analyze loss landscapes~\cite{intrinsic_dim_2018}, intrinsic dimension has expanded into the study of language models. \cite{intlm-Aghajanyan2021-wc} demonstrated its role in explaining the generalization power of fine-tuned pre-trained language models. However, their focus was limited to classification tasks. In contrast, our work extends the analysis of ID to generative tasks.  This distinction is crucial as we identify scenarios where the task's intrinsic dimension approaches the full model size, a phenomenon not typically observed in classification settings. Additionally, while \fastfood\ has been used for efficient language model fine-tuning~\cite{t-fewshot}, memory constraints limited their investigation of the intrinsic dimension (ID) in generative tasks to 500k;
in other words, because of scalability contraints, \cite{t-fewshot}~could only work with a target sketching dimension $\le$ 500k, which prevented searching for the true value of ID as we demonstrate on a summarization task where ID approaches the model dimension. Therefore, our work overcomes this limitation, allowing us to compute the intrinsic dimension of such tasks.

\paragraph{Scaling up influence functions}
Scaling influence functions for training-data attribution remains a crucial research direction. Previous works like~\cite{fastif-guo-etal-2021, bae-if-large-llms} have improved index creation, retrieval speed, and Hessian estimation. However, these approaches can still  be computationally demanding. Our work takes a distinct path, aiming to make influence function calculations fundamentally more efficient. Our HVP sketching methods seamlessly replace existing HVP and gradient computations in frameworks proposed within~\cite{fastif-guo-etal-2021, bae-if-large-llms, pruthi-tracin-20}. Furthermore, we offer eigenvector sketching to enhance methods like the Arnoldi iteration~\cite{schioppa-scaling-2021}. \cite{park2023trakattributingmodelbehavior}~has proposed to use dense random projections by materializing them in chunks and on-the-fly;
drawbacks of this approach are: (1) the lack of the scalability in the target sketching dimension and (2) the need of hardware-dependent custom implementations; on the other hand, our approach removes the requirement for specialized implementations (e.g., custom CUDA kernels). This flexibility enables easy integration into standard ML workflows using higher-level languages such as Jax. An orthogonal direction to
scaling up via sketching is that of using surrogate models, compare~\cite{engel2024faithfulefficientexplanationsneural}.

\paragraph{Hessian evolution during training}
Investigating how the Hessian evolves during training has shed light on the dynamics of deep learning, with seminal works~\cite{sagun2018empirical, tiny_gurari2018gradient} offering intriguing findings. These studies suggest the progressive disappearance of negative eigenvalues and the confinement of gradient descent within a small subspace. While~\cite{behrooz-eigens} developed a numerically stable Hessian analysis algorithm, its computational demands hinder its application to large-scale models over numerous iterations. Our work addresses this limitation by introducing sketching techniques to enable the efficient construction of large Krylov subspaces (e.g., $10^3$-dimensional) for models like GPT-2L (770M parameters). This advancement significantly surpasses the memory constraints of the method utilized by~\cite{behrooz-eigens}: a 3TB fp32 storage requirement would have been necessary for a comparable analysis using their approach.  Consequently, we are uniquely positioned to rigorously examine the conjectures proposed~\cite{sagun2018empirical, tiny_gurari2018gradient} within the context of pre-trained language models.

\section{Design Principles for Efficient Sketching Algorithms}
\label{sec:design_space}

In this section, we explore a design space for more efficient sketching algorithms. To establish a foundation, we first analyze the performance bottlenecks of existing algorithms, specifically \ailonchazelle\ and \fastfood, within the context of modern accelerators. This examination highlights two critical design choices: whether the gradient is sketched implicitly or explicitly, and the kind of pre-conditioner that is used. Informed by this analysis, we propose three novel algorithms: \bami, \frikadel, and \straight. We then delve into the theoretical underpinnings of \bami\ and \straight, providing rigorous proofs for their guarantees. Additionally, we demonstrate that \fastfood\ lacks the theoretical foundation required for sketching.  Relevant experimental findings are presented in Sections~\ref{sec:layer_selection_limitations} and~\ref{sec:exp_design_choices}.

\paragraph{Dense sketches and \ailonchazelle}
A $D$-dimensional sketch of the gradient of a real-valued function $L(\theta)$ ($\theta\in\real N.$)
is a random projection of the gradient $\nabla L\in\real N.$ to $\real D.$.
To ensure this projection preserves essential geometric properties, the random projection operator $\Phi:\real N.\to\real D.$
must, with high probability, concentrate the norm of $\Phi(x)$ around the norm of $x$. Mathematically, for each $\varepsilon$ and $\delta$
there exists a large enough target dimension $D(\varepsilon, \delta)$ so that for $D\ge D(\varepsilon, \delta)$:
\begin{equation}\label{eq:jl_prob}
{\rm Prob} \left(
 |\|\Phi(x)\|_2 -  \|x\|_2| \ge \varepsilon \|x\|_2
\right) \le \delta.
\end{equation}
This concept generalizes to sketching higher-order derivatives (see Appendix~\ref{appx:theory}).  For our purposes, consider the Hessian vector product operator ${\rm HVP}:\real N.\to\real N.$ 
defined as ${\rm HVP}(u)=\nabla^2 L(\theta)(u)$ A sketch of the HVP can be obtained as $v\mapsto \Phi({\rm HVP}(\Phi^T v))$, $v\in\real D.$.
This sketch defines a linear mapping $\real D.\to\real D.$~\cite{sketch-eigenvalues-Swartworth2023-by}.
While a simple \textbf{Dense Sketch} (using a random $D\times N$ Gaussian matrix) ensures the norm property, it has  $O(DN)$ memory and $O(DN^2)$ 
compute requirements. The \ailonchazelle\ algorithm~\cite{Ailon_undated-cc} addresses the compute cost with a sparser projection matrix:
\begin{equation}\label{eq:ailon_chazelle}
    \Phi(x) = \sqrt{\frac{N}{D}}\cdot G_s \cdot H_N \cdot B(x),
\end{equation}
where: $B\in\real N\times N.$ is a diagonal matrix with $B_{i,i} = \pm 1$ (random signs); $H_N$ is the $N$-dimensional Walsh-Hadamard transform (Appendix~\ref{appx:theory}); $G_s$ is a sparse $\real D\times N.$ Gaussian matrix with (in-expectation) $\Theta(D\log^2 M)$ non-zero entries ($M$ is a parameter). The $H_N\cdot B$ component preconditions sparse inputs $x$ for which~\eqref{eq:jl_prob} might otherwise fail. Implementing $G_s$
efficiently presents challenges on modern hardware and frameworks like Jax that lack native sparsity support.

\paragraph{\fastfood: Implicit Gradient sketching}
The \fastfood\ transform, introduced by~\cite{qle_fastfood13} in the context of kernel machines, provides a computationally efficient way to approximate high-dimensional feature maps. As a random feature generator~\cite{Liu2020-cx}, \fastfood\ constructs high-dimensional random features ($\in\real N.$) from a lower-dimensional input vector ($u\in\real D.$). Given $u\in\real D.$, \fastfood\ concatenates $\frac{N}{D}$ vectors of the form:
\begin{equation}\label{eq:fastfood}
    \Phi_i(u) = \sigma_F \cdot H_D\cdot G_v \cdot \Pi \cdot H_D \cdot B(u)\quad (1 \le i \le \frac{N}{D}),
\end{equation}
where $B$ and $H_D$ are as in~\eqref{eq:ailon_chazelle} (and are both $D\times D$-matrices),
$\Pi$ is a permutation matrix, $G_v$ is a diagonal $D\times D$-matrix with i.i.d. standard Gaussian entries, and
$\sigma_F$ is a normalization constant (see~\cite{qle_fastfood13}; for a practical implementation see the code released by~\cite{t-fewshot}).
\fastfood\ has the key advantage of constant memory cost, $O(N)$, regardless of the input dimension $D$. Since \fastfood\ defines
a map $\real D.\to\real N.$, direct sketching of a gradient is not possible. To address this, \cite{intrinsic_dim_2018} perform
what we call an \emph{Implicit Gradient Sketch}:
\begin{equation}\label{eq:implicit_dense_sketch}
    \sketch(\nabla_{\theta | \theta_0} L) = \nabla_{\omega | 0} L(\theta_0 + \fastfood(\omega));
\end{equation}
this formulation effectively applies the transpose of \fastfood\ to the gradient. While~\cite{qle_fastfood13} 
establish certain properties of \fastfood, a complete proof of its suitability as a random feature generator is missing. Additionally, whether the \fastfood\ satisfies sketching guarantees~\eqref{eq:jl_prob},
remains to be fully investigated (we will address it in Thm.~\ref{thm:fastfood_bad_inputs}).

\paragraph{Explicit sketches.} 
In light of equation \eqref{eq:implicit_dense_sketch}, it's natural to consider whether a direct sketch of the gradient could be achieved using  a map $\Phi:\real N.\to\real D.$. We define this as an \emph{Explicit Gradient Sketch}:
\begin{equation}\label{eq:explicit_sketch}
    \sketch(\nabla_{\theta | \theta_0} L) = \Phi(\nabla_{\theta | \theta_0} L).
\end{equation}
This approach offers flexibility. For random feature generation methods like \fastfood, $\Phi$ needs to 
be implemented as the transpose;  for sketching algorithms like \ailonchazelle, the explicit sketch can be  applied directly, while the transpose would be needed for the implicit form~\eqref{eq:implicit_dense_sketch}. In Appendix~\ref{appx:impl-details}, we provide a Jax-based tutorial on transposing sketching algorithms. Which one is the right approach? Intuitively, implicit sketches may seem more efficient since they avoid direct gradient materialization. However, as we'll demonstrate in Section~\ref{sec:exp_design_choices}, explicit sketches surprisingly offer significant performance advantages. Table~\ref{tab:gains_design_choices} quantifies the substantial wall-time reductions (approximately 70\% on average) across various algorithms when using explicit sketches. 

\paragraph{Removing the lookup bottleneck.}
In both \ailonchazelle\ and \fastfood\ algorithms, multiplications by $G_s$ and $\Pi$ introduce a lookup-based memory bottleneck on modern accelerators. This hinders performance, as seen in unacceptable early TPU results (compare Tab.~\ref{tab:lookup_removal}). To address this, we propose randomizing the pre-conditioner
$H_N$.  Efficient implementations of $H_N$ leverage the fact that it can be decomposed using Kronecker products;
specifically, $H_{AB} = H_A\otimes H_{B}$, which allows a recursive multiplication by $H_N$ in $O(N\log N)$-time and $O(N)$ storage.
We exploit this by permuting rows/columns within Kronecker factors, reducing memory access costs from $O(AB)$ to $O(A+B)$ in the
previous example. For optimal accelerator usage, we limit factors to roughly 1024 columns.
Building on this, we modify~\eqref{eq:fastfood}. In the resulting \bami\ algorithm~\eqref{eq:bami}, we remove $\Pi$, 
use row-permuted factors in $H^{\pi_1}_N$, and column-permuted factors in $H^{\pi_2}_N$:
\begin{equation}\label{eq:bami}
   \Phi(x) = R_D(\sqrt{\frac{N}{D}}\cdot H^{\pi_2}_N\cdot G_v \cdot H^{\pi_1}_N \cdot B(x)),
\end{equation}
where $R_D$ denotes restriction to the first $D$ coordinates.
While all matrices are now $N\times N$, efficient Hadamard implementations avoid large matrix materializations (see Appendix \ref{appx:impl-details} for our Jax code).
We further introduce \frikadel~\eqref{eq:frikadel}, where $H^{\pi_2}_N$  is removed:
\begin{equation}\label{eq:frikadel}
    \Phi(x) = R_D(\sqrt{\frac{N}{D}} \cdot G_v \cdot H^{\pi_1}_N \cdot B(x)).
\end{equation}
This can be seen as replacing $G_s$ in \ailonchazelle\  with a diagonal Gaussian matrix. Generalizations with multiple $(G_v,H^{\pi_1}_N)$ pairs are possible but weren't explored due to the vanilla version's strong results. Empirical results on TPUs (Table \ref{tab:lookup_removal}) show the crucial impact of our changes. \frikadel\ achieves a 100x wall-time reduction over \ailonchazelle, \bami\ a 64x reduction over \fastfood. While GPU wall-time remains similar, peak memory usage significantly improves. Appendix \ref{appx:add-experiments}  highlights \ailonchazelle's scaling issues beyond $D=2^{20}$ on GPUs.

\paragraph{Alternative pre-conditioners.}
To address the smoothing of sparse inputs in sketching algorithms, \cite{Ailon_undated-cc} introduced $H_N$ in~\eqref{eq:ailon_chazelle}.
The theoretical basis lies in the Heisenberg uncertainty principle, leveraging the Fourier Transform on a discrete group (see~\cite{Ailon_undated-cc}).
However, the computationally efficient Fast Fourier Transform (FFT) shares this desirable property. This raises the question of whether the FFT might yield performance gains for sketching. We find that replacing $H_N$
with the FFT offers significant advantages. Experimentally, it reduces wall time by 62\% on GPUs (Tab~\ref{tab:gains_design_choices}).
Inspired by the Kronecker product structure enabling efficient $H_N$
implementation, we propose another generalized pre-conditioner, $Q$. This random $N\times N$ orthogonal matrix has a Kronecker decomposition of $K$ independent orthogonal matrices of sizes $\{B_i\times B_i\}_{i=1}^K$, sampled according to the Haar measure on $SO(B_i)$.  Our approach allows direct application of $Q$ without the additional diagonal matrix $B$, and offers a 40\% wall time reduction on GPUs (Table~\ref{tab:gains_design_choices}). This $Q$ pre-conditioner has the potential to unlock broader optimizations within sketching algorithm design as discussed in the next subsection.

\paragraph{Direct usage of the pre-conditioner $Q$.}
Inspired by the design of the pre-conditioner $Q$, we introduce a novel sketching algorithm, \straight.  This ablation explores the direct use of $Q$ to transform the input, potentially leading to improved efficiency and memory usage. \straight\ is defined as:
\begin{equation}\label{eq:straight}
    \Phi(x) = \sqrt{\frac{N}{D}}\cdot Q(x).
\end{equation}
Here, $Q$ is a random $D\times N$-orthogonal matrix with a Kronecker product decomposition of 
$K$ independent orthogonal matrices of sizes $\{D_i\times B_i\}_{i=1}^K$.  Each factor is generated by sampling from 
$SO(B_i)$ according to the Haar measure and restricting to the first $D_i$ rows. Importantly, \straight\ generalizes the approach of~\cite{lotfi2022pacbayes} by employing more Kronecker factors. This offers the potential for significant memory reductions (Appendix~\ref{appx:theory}).

\paragraph{Diagrams.}
Figure~\ref{fig:algo_diagram} illustrates \bami, \frikadel\ and \straight\ with diagrams.
\begin{figure}
\centering
\includegraphics[width=0.6\textwidth]{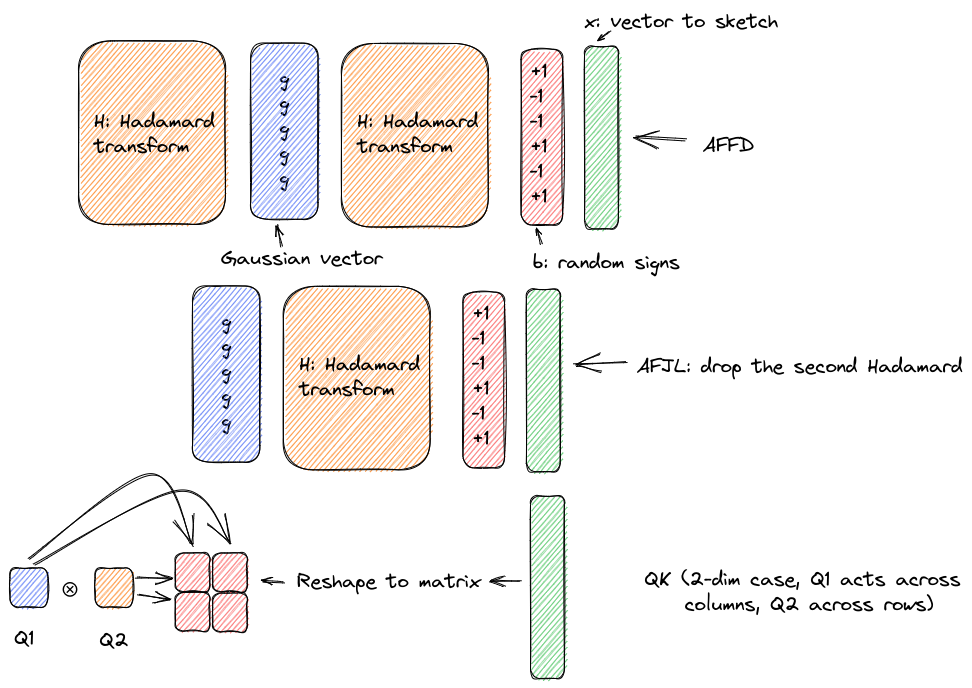}
\caption{Diagram to illustrate our proposed sketching algorithms.}
\label{fig:algo_diagram}
\end{figure}

\paragraph{Theoretical results}
We now delve into the theoretical underpinnings of our proposed algorithms, emphasizing the interplay between theory and experimentation.  A key finding is a limitation of the \fastfood\ algorithm:
\begin{theorem}
\label{thm:fastfood_bad_inputs}
There are some inputs $x$ for which \fastfood\ does not satisfy the sketching property~\eqref{eq:jl_prob}.
\end{theorem}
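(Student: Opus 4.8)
The plan is to exhibit one explicit input whose \fastfood\ sketch has norm fluctuating by a constant amount that does not shrink as the target dimension $D$ grows, contradicting the requirement that \eqref{eq:jl_prob} hold for all $D\ge D(\varepsilon,\delta)$.

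First I would recall, as observed around \eqref{eq:implicit_dense_sketch}, that used as a sketch \fastfood\ acts through its transpose: the relevant linear map is $\Phi^T:\real N.\to\real D.$, and grouping the coordinates of $\real N.$ into the $N/D$ Fastfood blocks it decomposes as $\Phi^T x = \sum_{i=1}^{N/D}(\Phi_i)^T x^{(i)}$ with $(\Phi_i)^T = \sigma_F\, B^{(i)}\,\hat H_D\,(\Pi^{(i)})^T\, G_v^{(i)}\,\hat H_D$; here $\hat H_D$ is the Walsh--Hadamard matrix in its orthonormal scaling (the scaling being absorbed into $\sigma_F$), and I use that $B$, $\hat H_D$, $G_v$ are symmetric. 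Since the blocks are independent and $\mathbb E[G_v^{(i)}]=0$, the cross terms have zero expectation and $\mathbb E\|\Phi^T x\|^2 = \sigma_F^2\sum_i\|x^{(i)}\|^2 = \sigma_F^2\|x\|^2$, so the only normalization compatible with an approximately unbiased sketch is $\sigma_F=1$ (and with any other $\sigma_F$, \eqref{eq:jl_prob} already fails for small $\varepsilon$).

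Next I would choose the counterexample so as to defeat the one randomizing ingredient of a block, the \emph{diagonal} Gaussian $G_v^{(i)}$, from which the input is separated by exactly one orthogonal factor $\hat H_D$. Take $x^\star$ supported on block $1$ with $x^{(1)} := \hat H_D e_1 = \tfrac1{\sqrt D}(1,\dots,1)$ — the all-ones vector, namely a column of $\hat H_D$, with $\|x^\star\|=1$ — and $x^{(i)}=0$ for $i>1$. Then $\hat H_D x^{(1)} = e_1$, so $G_v^{(1)}\hat H_D x^{(1)} = g\,e_1$ with $g$ a single standard Gaussian (the first diagonal entry of $G_v^{(1)}$), and the remaining factors $B^{(1)}\hat H_D(\Pi^{(1)})^T$ are orthogonal; hence $\|\Phi^T x^\star\| = |g|$, whose law is independent of $D$ and $N$.

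Finally, for $\varepsilon=\tfrac12$ (say), ${\rm Prob}\bigl(\,\bigl|\,\|\Phi^T x^\star\|-\|x^\star\|\,\bigr|\ge\tfrac12\|x^\star\|\,\bigr) = {\rm Prob}\bigl(\bigl||g|-1\bigr|\ge\tfrac12\bigr) =: c_0 > 0$ is a fixed constant, so for any $\delta<c_0$ no threshold $D(\varepsilon,\delta)$ can make \eqref{eq:jl_prob} hold on $x^\star$. I expect the difficulty here to be conceptual rather than computational: the two things one has to see are (i) that \fastfood\ used for sketching is really its transpose, and (ii) that since its sole randomizing factor is a diagonal Gaussian, the adversarial input — the constant vector, which the adjacent Hadamard collapses onto a single coordinate — is passed through $G_v$ as a scalar Gaussian multiple, so the smoothing preconditioner $\hat H_D B$, now applied only afterwards, cannot restore concentration. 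The only calculation to be careful with is bookkeeping of which factors are orthogonal and of the normalization, so that exactly one Gaussian survives.
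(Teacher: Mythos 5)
Your proposal matches the paper's proof: both transpose \fastfood\ to its implicit‐sketch form, pick the input supported on a single block with $H_D x_1 = e_1$ (the normalized all-ones vector), observe that only the single diagonal Gaussian entry survives through the remaining orthogonal factors so the sketched norm equals $\sigma_F|g_1|$, and conclude that this does not concentrate. The only (minor) addition on your side is the explicit unbiasedness computation to pin down $\sigma_F=1$, which the paper merely asserts.
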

Note that this limitation is specific to \fastfood\, and not inherent to implicit-mode algorithms. This distinction is important: the explicit-mode formulation allowed to simplify the theoretical analysis to prove Thm.~\ref{thm:fastfood_bad_inputs}.
Next, we establish a theoretical guarantee for our \bami\ algorithm.  Compared to~\cite{Ailon_undated-cc} there is
added complexity as independence arguments cannot be used for $G_v$; we thus apply the Hanson-Wright inequality for quadratic
forms to prove:
\begin{theorem}
\label{thm:bami_theoretical_guaranteed}
\bami\ satisfies~\eqref{eq:jl_prob} with 
\begin{equation}
       \delta = \delta_1 + 2\exp\left(-C\varepsilon^2\frac{D}{4\log^2\frac{2N}{\delta_1}}\right),
\end{equation}
for a universal constant $C$ and for any $\delta_1>0$.
\end{theorem}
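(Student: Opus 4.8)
The plan is to first reduce to a "flat" input by a standard preconditioning argument, then recognise the squared sketch norm as a Gaussian quadratic form in the diagonal of $G_v$ and apply the Hanson--Wright inequality. Normalise so that $\|x\|_2=1$, and write the Walsh--Hadamard factors so that $H^{\pi_1}_N,H^{\pi_2}_N$ are orthogonal with entries $\pm 1/\sqrt N$; the row/column permutations inside the Kronecker factors preserve both properties, so they play no role in what follows (they are present only to remove the memory-lookup bottleneck). Put $y:=H^{\pi_1}_N B(x)$, so $\|y\|_2=1$ and each coordinate $y_j=\sum_i(\pm x_i/\sqrt N)\,\epsilon_i$ is a Rademacher sum ($\epsilon_i=B_{ii}$) with subgaussian parameter $1/N$. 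Hoeffding together with a union bound over the $N$ coordinates shows that the event $E:=\{\|y\|_\infty\le\alpha\}$, with $\alpha^2=2\log(2N/\delta_1)/N$, has probability at least $1-\delta_1$. From here on we condition on $E$ and freeze $y$ and the permutations, leaving the vector $g\in\mathbb{R}^N$ of i.i.d.\ standard Gaussians on the diagonal of $G_v$ as the only randomness.

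On $E$ we have $\Phi(x)=\sqrt{N/D}\,R_DH^{\pi_2}_N\operatorname{diag}(y)\,g$, so $\|\Phi(x)\|_2^2=\tfrac{N}{D}\,g^{\top}Mg$ with $M:=\operatorname{diag}(y)\,\Pi\,\operatorname{diag}(y)$ and $\Pi:=(H^{\pi_2}_N)^{\top}R_D^{\top}R_DH^{\pi_2}_N$. Since the first $D$ rows of $H^{\pi_2}_N$ are orthonormal, $\Pi$ is an orthogonal projection of rank $D$ whose diagonal entries all equal $D/N$ (rows have squared entries $1/N$); hence $\operatorname{tr}(M)=\sum_iy_i^2\Pi_{ii}=D/N$ and $\mathbb{E}[\|\Phi(x)\|_2^2]=\tfrac{N}{D}\operatorname{tr}(M)=1$. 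For the spectral data of $M$: because $\Pi$ is a projection, $\|M\|_{\mathrm{op}}\le\|\operatorname{diag}(y)\|_{\mathrm{op}}^2\le\|y\|_\infty^2\le\alpha^2$ on $E$; and $\operatorname{rank}(M)\le\operatorname{rank}(\Pi)=D$, so $\|M\|_F^2\le\operatorname{rank}(M)\,\|M\|_{\mathrm{op}}^2\le D\alpha^4$. The slack in this last (rank-times-operator-norm) bound, rather than the sharper $\|M\|_F^2\le\|M\|_{\mathrm{op}}\operatorname{tr}(M)$, is exactly what produces the second power of the logarithm in the statement.

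Applying Hanson--Wright to $g^{\top}Mg$: $\operatorname{Prob}(|g^{\top}Mg-\operatorname{tr}(M)|>u)\le 2\exp(-c\min(u^2/\|M\|_F^2,\,u/\|M\|_{\mathrm{op}}))$ for a universal $c$. Choosing $u=D\varepsilon/N$ makes this event coincide with $\{|\|\Phi(x)\|_2^2-1|>\varepsilon\}$, and substituting $\|M\|_F^2\le 4D\log^2(2N/\delta_1)/N^2$ and $\|M\|_{\mathrm{op}}\le 2\log(2N/\delta_1)/N$ gives $u^2/\|M\|_F^2\ge D\varepsilon^2/(4\log^2(2N/\delta_1))$ and $u/\|M\|_{\mathrm{op}}\ge D\varepsilon/(2\log(2N/\delta_1))$; in the regime $\varepsilon\le 1$ the former is the minimum. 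Since $|\|\Phi(x)\|_2^2-1|\le\varepsilon$ forces $|\|\Phi(x)\|_2-1|\le\varepsilon$, a union bound with $\operatorname{Prob}(E^{c})\le\delta_1$ yields the claimed $\delta$ with $C=c$. The main obstacle, compared with the analysis of \ailonchazelle, is precisely the step that forces Hanson--Wright: the Gaussian weights in $G_v$ do not hit the output coordinates one at a time (they are mixed by $H^{\pi_2}_N$), so the output cannot be treated as a vector of independent Gaussians as in~\cite{Ailon_undated-cc}; one must instead control the geometry of $M$, and getting the right powers of $\log$ requires pairing the flatness bound on $y$ (controlling $\|M\|_{\mathrm{op}}$) with the rank bound on $\Pi$. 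A secondary point that needs care is that the entire quadratic-form computation must be carried out conditionally on the flatness event $E$.
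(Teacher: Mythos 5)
Your proof is correct and arrives at exactly the form of the theorem. It follows the paper's overall template — condition on a flatness event for $H^{\pi_1}_N B(x)$ obtained via Hoeffding, then control the conditional randomness in $G_v$ via Hanson--Wright — but the execution of the second half is genuinely different and, for this purpose, cleaner. The paper splits $\|\Phi(x)\|_2^2$ into a diagonal part $T_1=\sum_j g_j^2 u_j^2$ and an off-diagonal part $T_2$, handles $T_1$ with Lipschitz concentration of Gaussian measure, handles $T_2$ with Hanson--Wright, and bounds the Frobenius norm of the off-diagonal bilinear form by a Fourier-analytic argument: it analyzes $\sum_{i\le D}(-1)^{\langle\Delta(i),\Delta(\Pi(j))+\Delta(\Pi(k))\rangle}$, showing it vanishes unless $\Delta(\Pi(j))$ and $\Delta(\Pi(k))$ agree on the low-order bits, which yields at most $N/D$ nonzero entries per row. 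You instead treat $\|\Phi(x)\|_2^2=\tfrac{N}{D}\,g^\top Mg$ with $M=\operatorname{diag}(y)\,\Pi\,\operatorname{diag}(y)$ as a \emph{single} Gaussian quadratic form, observe that $\Pi=(H^{\pi_2}_N)^\top R_D^\top R_D H^{\pi_2}_N$ is a rank-$D$ orthogonal projection with constant diagonal $D/N$, and bound $\|M\|_F^2\le\operatorname{rank}(M)\|M\|_{\mathrm{op}}^2\le D\|y\|_\infty^4$. The two routes yield identical constants (one checks that the paper's $\|B\|_F^2\le t_\infty^4/D$ equals $\tfrac{N^2}{D^2}\cdot D\alpha^4$ under $\alpha=t_\infty/\sqrt N$), so nothing is lost quantitatively; what you gain is that (i) the Walsh--Hadamard combinatorics are replaced by a generic rank bound that would apply verbatim to any orthogonal $H^{\pi_2}_N$ with flat entries, and (ii) you avoid the paper's separate $T_1$ concentration term entirely, obtaining the two-term $\delta$ of the main theorem statement directly rather than via the three-term bound in the appendix with an ``argue the extra term away'' remark.
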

Finally, we analyze \straight. Its structure allows for concentration arguments on orthogonal groups, leading to:
\begin{theorem}
\label{thm:qk_theoretical_guaranteed}
\straight\ satisfies~\eqref{eq:jl_prob} with $$\delta=2\sum_i \exp(-4C D_i((1+\varepsilon)^{1/K}-1)^2),$$
for a universal constant $C$.
\end{theorem}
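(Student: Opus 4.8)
The plan is to convert the Kronecker structure of $Q$ into a one-factor-at-a-time estimate and then glue the $K$ pieces together by a union bound. Write $Q=Q_1\otimes\cdots\otimes Q_K$ with $Q_j=R_{D_j}O_j$, where $O_j$ is Haar on $SO(B_j)$ and $R_{D_j}$ restricts to the first $D_j$ rows, and use $(A\otimes B)(A'\otimes B')=AA'\otimes BB'$ to factor
\[
Q=M_1M_2\cdots M_K,\qquad M_j:=I_{D_1}\otimes\cdots\otimes I_{D_{j-1}}\otimes Q_j\otimes I_{B_{j+1}}\otimes\cdots\otimes I_{B_K},
\]
so that $M_j$ changes only the $j$-th tensor slot, from dimension $B_j$ to $D_j$. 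Put $x^{(K+1)}=x$ and $x^{(j)}=M_jx^{(j+1)}$, so that $x^{(1)}=Qx$; by homogeneity of~\eqref{eq:jl_prob} we may assume $\|x\|_2=1$ and must show that $\sqrt{N/D}\,\|Qx\|_2$ concentrates around $1$ (recall $\Phi(x)=\sqrt{N/D}\,Q(x)$ in~\eqref{eq:straight}). The elementary fact that makes this work is that $M_j$ acts blockwise: grouping the coordinates of the input by the multi-index $\beta$ outside slot $j$, one has $\|M_jx^{(j+1)}\|_2^2=\sum_\beta\|Q_j(x^{(j+1)})_\beta\|_2^2$ with each block $(x^{(j+1)})_\beta\in\mathbb{R}^{B_j}$ and $\sum_\beta\|(x^{(j+1)})_\beta\|_2^2=\|x^{(j+1)}\|_2^2$.

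Next I would prove, for each $j$, a conditional single-factor bound: conditionally on $x^{(j+1)}$ (which is independent of $O_j$ because the $O_i$ are independent),
\[
\Pr\Bigl(\ \big|\,\|M_jx^{(j+1)}\|_2-\sqrt{\tfrac{D_j}{B_j}}\,\|x^{(j+1)}\|_2\,\big|\ \ge\ \eta\,\sqrt{\tfrac{D_j}{B_j}}\,\|x^{(j+1)}\|_2\ \Bigr)\ \le\ 2\exp\!\bigl(-cD_j\eta^2\bigr).
\]
To see this, let $\tilde X$ be the matrix whose columns are the blocks $(x^{(j+1)})_\beta$, so that $\|M_jx^{(j+1)}\|_2=F(O_j):=\|R_{D_j}O_j\tilde X\|_F$. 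Then $F$ is $\|\tilde X\|_F$-Lipschitz on $SO(B_j)$ for the geodesic metric, so concentration of measure on $SO(B_j)$ (sub-Gaussian with variance proxy of order $\|\tilde X\|_F^2/B_j$) gives $\Pr(|F-\mathbb{E}F|\ge t)\le 2\exp(-c'B_jt^2/\|\tilde X\|_F^2)$; taking $t=\eta\sqrt{D_j/B_j}\,\|\tilde X\|_F$ turns the exponent into $c'D_j\eta^2$. It remains to recenter $\mathbb{E}F$ at $\sqrt{D_j/B_j}\,\|\tilde X\|_F$: since each column of $O_j\tilde X$ is uniform on a sphere, $\mathbb{E}[F^2]=\tfrac{D_j}{B_j}\|\tilde X\|_F^2$, whence Jensen gives $\mathbb{E}F\le\sqrt{D_j/B_j}\,\|\tilde X\|_F$, while the sub-Gaussian variance bound gives $(\mathbb{E}F)^2\ge\mathbb{E}[F^2]-O(\|\tilde X\|_F^2/B_j)$; together these pin $\mathbb{E}F$ to $\sqrt{D_j/B_j}\,\|\tilde X\|_F$ up to a multiplicative factor $1-O(1/D_j)$, which is absorbed into the universal constant.

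Finally I would glue the $K$ bounds. Set $\eta=(1+\varepsilon)^{1/K}-1$, so $\eta>0$ and $1-\eta>0$ for $\varepsilon<1$. On the intersection $\mathcal{G}$ of the $K$ events above, telescoping $\|x^{(j)}\|_2/\|x^{(j+1)}\|_2\in\bigl((1-\eta)\sqrt{D_j/B_j},(1+\eta)\sqrt{D_j/B_j}\bigr)$ and using $\prod_j\sqrt{D_j/B_j}=\sqrt{D/N}$ gives $\sqrt{N/D}\,\|Qx\|_2\in\bigl((1-\eta)^K,(1+\eta)^K\bigr)$; here $(1+\eta)^K=1+\varepsilon$ and $(1-\eta)^K=(2-(1+\varepsilon)^{1/K})^K\ge 1-\varepsilon$, the last step because concavity of $t\mapsto t^{1/K}$ yields $(1+\varepsilon)^{1/K}+(1-\varepsilon)^{1/K}\le 2$. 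Thus the conclusion of~\eqref{eq:jl_prob} holds on $\mathcal{G}$, and a union bound over $j=1,\dots,K$ gives $\Pr(\mathcal{G}^c)\le\sum_{j=1}^K 2\exp(-cD_j\eta^2)=2\sum_i\exp\bigl(-cD_i((1+\varepsilon)^{1/K}-1)^2\bigr)$, which is the asserted $\delta$ after renaming $c=4C$. (The null event $x^{(j+1)}=0$ has probability zero and is discarded.)

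The step I expect to be the main obstacle is the single-factor bound with the correct $\sqrt{D_j/B_j}$ centering: because all blocks $\beta$ share the same Haar matrix $O_j$, independence across blocks is not available, which forces the Frobenius/Stiefel reformulation of $F$; one then has to check that the Jensen-plus-variance correction to $\mathbb{E}F$ does not erode the exponent (this is where a mild implicit lower bound on the $D_i$, or a slightly larger universal constant, enters — in the vacuous regime $D_i\eta^2=O(1)$ there is nothing to prove) and that the curvature constant driving concentration on $SO(B_j)$ is uniform in $B_j$. The remaining ingredients — the algebraic factorization of $Q$, the blockwise identity, the telescoping, and the scalar inequality $(1+\varepsilon)^{1/K}+(1-\varepsilon)^{1/K}\le 2$ — are routine.
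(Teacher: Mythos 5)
Your proof is essentially the paper's proof: the same factor-by-factor decomposition of $Q$, the same Lipschitz estimate for the Frobenius-norm map on $SO(B_j)$, the same identification $\mathbb{E}[F^2]=\tfrac{D_j}{B_j}\|\tilde X\|_F^2$, and the same choice $\eta=(1+\varepsilon)^{1/K}-1$ glued by a union bound (the paper writes $\prod_i(1-\delta_i)$ and then lower-bounds it by $1-\sum_i\delta_i$, which is the same thing). The one place where you diverge is the recentering step: the paper passes from $\mathbb{E}F$ to the median and then to $\sqrt{\mathbb{E}F^2}$ via equivalence of mean/median concentration and a layer-cake integral, whereas you recenter directly by combining Jensen with the sub-Gaussian variance bound $\operatorname{Var}(F)=O(\|\tilde X\|_F^2/B_j)$ to get $\mathbb{E}F=\sqrt{D_j/B_j}\,\|\tilde X\|_F\,(1-O(1/D_j))$, discarding the vacuous regime $D_j\eta^2=O(1)$. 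That is a cleaner route to the same estimate and you are right that it needs the ``nothing to prove when $D_i\eta^2=O(1)$'' remark; you also handle a couple of small points the paper skips (conditioning on $x^{(j+1)}$ makes the bound manifestly scale-invariant, and the scalar inequality $(1-\eta)^K\ge 1-\varepsilon$ is checked explicitly). None of this changes the structure of the argument.
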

Crucially, the bound in Thm.~\ref{thm:qk_theoretical_guaranteed} is less favorable than that of Thm.~\ref{thm:bami_theoretical_guaranteed}.
This is due to the $1/K$-root, summation over sub-dimensions, and concentration depending on the $D_i$. 
These theoretical insights support our experimental findings in Section~\ref{sec:exp_design_choices}, where \straight\  requires higher target dimensions ($D$) to achieve performance comparable to \bami. Because of space constraints the proofs are included in Appendix~\ref{appx:theory}.

\section{Expanding the Utility of Sketching Algorithms}
In this Section we expand the usage of sketching to other applications.

\label{sec:applications}
\paragraph{Improving the search for the intrinsic dimension.}
The intrinsic dimension ($D_{int}$)~\cite{intrinsic_dim_2018}, is the minimum dimension ($D$)
of a random subspace ($V$) where SGD yields at least 90\% of the full model's performance on a target metric ($\tau$\footnote{While 90\% is common, this is a configurable parameter.}). Sketching algorithms, more efficient than FastFood, already accelerate the search for 
$D_{int}$~\cite{lotfi2022pacbayes}. Our memory-efficient algorithms enable us to investigate scenarios where $D_{int}$
 may approach the model dimension. For instance, \cite{t-fewshot} applied \fastfood\ to fine-tune generative language models but capped the target dimension ($D$) to 500k due to memory limits. We propose a novel search algorithm that estimates $D_{int}$ in a single training run. Current methods rely on multiple runs across potential $D_{int}$ values. To streamline, consider a binary search approach (assuming $D_{int}$ 
  is a power of 2) starting from an initial guess ($D_{min}$) up to the model parameter count ($N$). This would require
$\lceil \log_2\frac{N}{D_{min}}\rceil$ runs. Instead, we propose a single training run where $D$ increases progressively.  The heuristic: a fixed computational budget ($c$ steps) should yield an expected improvement of at least $\delta$. We start with $D=D_{min}$.
If, after $c$ steps, the target metric's improvement is less than $\delta$ or hasn't reached $\tau_{90}$, 
we double $D$.  This yields an estimate $D^{*}$ within a factor of 2 of $D_{int}$. A subsequent run with $D=D^{*}/2$
verifies if this factor can be eliminated. See Appendix~\ref{appx:impl-details} for Python code. In Sec.~\ref{sec:exp_intrinsic_dimension}, we  apply this approach to pre-trained language models on classification and generative tasks.  We find that $D_{int}\ll N$ for classification,  but for the generative task,  $D_{int}$ depends on the choice of $\tau$ and can approach the parameter count $N$.
This finding is significant, as it challenges prevailing assumptions about intrinsic dimensionality in such models~\cite{intrinsic_dim_2018, intlm-Aghajanyan2021-wc, lotfi2022pacbayes}. These studies primarily focused on classification tasks, obtaining generalization bounds like 
$O(\sqrt{D_{int}})$ (\cite[Eq.~4]{intlm-Aghajanyan2021-wc}, 
\cite[Sec.~3]{lotfi2022pacbayes}) where $D_{int}$ 
was presumed  much smaller than the model dimension. Our results suggest a  need for non-linear projection algorithms to achieve lower intrinsic dimensionality for generative tasks.

\paragraph{Scaling eigenvalue estimation}
Investigating the Hessian's spectrum and eigenvectors often relies on memory-bound iterative algorithms. The Arnoldi algorithm, as used in~\cite{behrooz-eigens},  requires full re-orthogonalization for stability. This necessitates storing vectors as large as the model itself for each iteration: a constraint that limits the number of estimable eigenvalues (e.g., Krylov subspaces of dimension $\simeq 90$ in~\cite{behrooz-eigens}). Inspired by the theoretical work of~\cite{sketch-eigenvalues-Swartworth2023-by}, we propose sketching to address this memory bottleneck.  With a sketching dimension of $10^6$, we can readily construct 1k-dimensional Krylov subspaces for pre-trained language models like BART, Roberta, and GPT-2L.  This represents a breakthrough because the method in~\cite{behrooz-eigens} would demand an impractical 3TB of storage for GPT-2L alone. Our technique enables exploration of conjectures~\cite{sagun2018empirical, behrooz-eigens, tiny_gurari2018gradient} about Hessian structure in the context of fine-tuning large language models.  These conjectures are elaborated in the experimental results on eigenvalue estimation (Sec.~\ref{subsec:eigen_evol}).  Crucially, we find that Hessian spectra in these models may deviate significantly from behaviors observed in smaller networks.

\section{Experiments}
\label{sec:experiments}

To thoroughly evaluate the proposed sketching methods, we present a comprehensive set of experiments.  First, we highlight the limitations of existing TDA scaling strategies (Sec.~\ref{sec:layer_selection_limitations}). Next, we dissect the impact of specific design choices on our sketches (Sec.~\ref{sec:exp_design_choices}). We then introduce and validate an algorithm for intrinsic dimension estimation, enabling computational savings (Sec.~\ref{sec:exp_intrinsic_dimension}) and show-casing that the intrinsic dimensionality of generative tasks can be large. Finally, we apply our techniques to explore the evolution of the Hessian spectrum during pre-trained language model fine-tuning (Sec.~\ref{subsec:eigen_evol}).

\subsection{How we define the Training-Data Attribution score.}
In TDA there are different ways to measure the similarity score between two examples $x$ and $y$. In our experiments we opt for the TDA score defined as $\nabla_{\theta}L(\theta,x) \cdot \nabla_{\theta}L(\theta,z)$
because of its simplicity, allowing us to iterate on multiple algorithms and layer selection schemes, and being a building block for more complicated methods. While high correlation with full gradient dot products may not be the definitive measure of long-term TDA performance~\cite{park2023trakattributingmodelbehavior, schioppa2023theoreticalpracticalperspectivesinfluence}, it is a practical metric in the short time range and a building block of more computationally intensive methods like TRAK~\cite{park2023trakattributingmodelbehavior}. For example, in the short time range, gradient dot products correlate with loss changes and are relevant to select examples for error correction~\cite{schioppa2023theoreticalpracticalperspectivesinfluence}. Evaluating with the LDS from TRAK would introduce more hyper-parameters and considerable more computation:
one would need at least 50 models fine-tuned on different subsets of the data; moreover, TRAK itself relies on accurate gradient sketches, as measured by dot products, as the basic building block as TRAK demonstrates that ``preserving inner products to sufficient accuracy results in a gradient-descent system that approximately preserves the same evolution as the one corresponding to model re-training''~\cite[C.2]{park2023trakattributingmodelbehavior}.

\subsection{Shortcomings of previous Training-Data Attribution scaling strategies.}

\label{sec:layer_selection_limitations}
\begin{table}
\caption{Layer selection results in unreliable estimates for influence
scores and eigenvalue estimation. The best correlation with ground truth influence scores does not exceed $90\%$
and is quite low for most layers; the relative error in eigenvalue prediction is always at least $20\%$.}
\label{tab:layer_sel_bad}
\vskip 0.15in
\begin{center}
\begin{sc}
\begin{tabular}{|llrr|}
\hline
      Model &     layer &    r & eig.~err. \\
\hline
GPT-2 & Tok. Emb. & 0.16 & 0.72 \\
GPT-2 & 1 & 0.75 & 0.24 \\
GPT-2 & 2 & 0.89 & 0.31 \\
GPT-2 & 3 & 0.90 & 0.19 \\
GPT-2 & 4 & 0.89 & 0.24 \\
GPT-2 & 5 & 0.78 & 0.37 \\
GPT-2 & 6 & 0.38 & 0.40 \\
\hline
\end{tabular}
\end{sc}
\end{center}
\vskip -0.1in
\end{table}

\begin{table}
\caption{Dense projections on the layers do not scale; for each layer we report the wall time for the maximum dimension that does not result in an OOM.}
\label{tab:layer_sel_inefficient}
\vskip 0.15in
\begin{center}
\begin{sc}
\begin{tabular}{|lrr|}
\hline
Layer &     GPU wall (ms) &    $k$ (max val before OOM). \\
\hline
Tok. Emb. & 31.2 & 32 \\
layer 1 & 25.1 & 128 \\
layer 2 & 24.4 & 128 \\
layer 3 & 23.9 & 128 \\
layer 4 & 23.1 & 128 \\
layer 5 & 22.4 & 128 \\
layer 6 & 21.7 & 128 \\
\hline
\end{tabular}
\end{sc}
\end{center}
\vskip -0.1in
\end{table}
Past work~\cite{pruthi-tracin-20, first-last-yeh22, fastif-guo-etal-2021} tackles the memory bottleneck of Training-Data Attribution by calculating gradients restricted to a specific layer and potentially applying a Dense Sketch. Here, we demonstrate that layer selection distorts both influence scores and eigenvalue estimates, while Dense Sketches exhibit poor scaling characteristics. These findings align closely with the first subsection of Sec.~\ref{sec:design_space}. Furthermore, advice on layer selection lacks consistency: \cite{pruthi-tracin-20} promotes the last layer, whereas \cite{first-last-yeh22} supports using Token Embeddings in NLP tasks. We demonstrate the distortion caused by layer selection on influence scores (the inner product of two gradients). Considering $2^{12}$ pairs of points $(x, z)$, we compute the Pearson correlation $r$ between the TDA score $\nabla_{\theta}L(\theta,x) \cdot \nabla_{\theta}L(\theta,z)$ estimated using a layer-specific gradient and the ground truth based on the full gradient. We adopt the setup of~\cite{influence-theory23}: a generative task fine-tuning GPT-2 on the WikiText-103 dataset (BART and zsRE results in Appendix~\ref{appx:add-experiments}). \textbf{Our findings indicate the unreliability of layer selection (Table~\ref{tab:layer_sel_bad})}. Correlations with ground truth rarely exceed 90\% and are significantly lower for most layers. In contrast, \frikadel\ achieves a 98\% correlation with a compact $D=2^{13}$. Extending the analysis to Hessian-based influence functions by looking into eigenvalue estimation emphasizes the shortcomings of layer selection. We aim to compute the top 10 eigenvalues of both the full Hessian and those restricted to a chosen layer. Even with potential differences in magnitude and location, layer selection could still be valuable if the true eigenvalues were approximated well by applying an orientation-preserving linear map $\real 1.\to\real 1.$ to those computed for a particular layer. However, this is not the case, with the relative mean absolute error at 20\% on the best layer (Table~\ref{tab:layer_sel_bad}). Finally, \textbf{layer selection coupled with dense projections faces severe scalability limitations}. Setting $D=2^{12}$ within the same setup highlights this issue.  Limited memory forces us to divide the computation into $D/k$ projections to $\real k.$ where $k$ is the smallest power of 2 enabling a dense projection without an Out-of-Memory error. For token embeddings, we find $k=32$, whereas other layers require $k=128$ on a V100 GPU. Projecting to $\real k.$ takes $\sim 31ms$ for token embeddings, resulting in a total of $\sim 4s$ to project to $\real D.$. Other layers require $\sim 24 ms$ per projection to $\real k.$ and  $\sim 0.77s$ to project to $\real D.$, see Table~\ref{tab:layer_sel_inefficient} for a per-layer breakdown. Finally, an alternative approach to dense projections is to materialize dense matrices on the fly in chunks~\cite{park2023trakattributingmodelbehavior}; this has two
substantial disadvantages: (1) the runtime scales linearly with the target dimension (memory is traded off with compute), and (2) specialized kernels are necessary for efficient implementation, with unclear applicability to TPUs; we include a demonstration
of these limitations in Appendix~\ref{appx:add-experiments}.

\subsection{Analyzing the Impact of Design Choices}
\label{sec:exp_design_choices}
\begin{table}
\caption{Wall-time $T$ and peak memory usage $M$ comparison on gradient sketches for GPT-2. Removing look-ups is crucial
for TPU performance and decreasing GPU memory utilization.}
\label{tab:lookup_removal}
\begin{center}
\begin{sc}
\begin{tabular}{|l|rr|rr|}\hline
Algo & \multicolumn{2}{|c|}{GPU (V100)} & \multicolumn{2}{|c|}{TPU (v2)}\\ \hline\hline
 & $T$ (ms) & $M$ (Gb) &  $T$ (ms) & $M$ (Gb)\\
\hline
\ailonchazelle & 123 & \cellcolor{red} \itshape \color{white} 6.9 & \cellcolor{red} \itshape \color{white} 8997 & 3.0 \\
\bami & \cellcolor{red} \itshape \color{white} 205 & 3.2 & 134 & 2.8 \\
\tfastfood & 197 & 4.1 & 8694 & \cellcolor{red} \itshape \color{white} 4.3 \\
\frikadel & 116 & 2.9 & 89 & 2.7 \\
\straight & \cellcolor{blue} \bfseries \color{white} 82 & \cellcolor{blue} \bfseries \color{white} 1.7 & \cellcolor{blue} \bfseries \color{white} 64 & \cellcolor{blue} \bfseries \color{white} 1.1 \\
\hline
\end{tabular}
\end{sc}
\end{center}
\end{table}

\begin{table}
\caption{Speed-ups (ratio $R$ of the slowest wall-time to the fastest one)
corresponding to changing a design choice (e.g. implicit to explicit or $H_N$ to the FFT.).\newline
}
\label{tab:gains_design_choices}

\begin{subtable}{.5\linewidth}
\centering
\begin{sc}
\begin{tabular}{|l|c|c|}\hline
Algo & GPU (V100) ($R$) & TPU (v2) ($R$) \\ \hline
\multicolumn{3}{|c|}{implicit $\to$ explicit} \\
\hline
\ailonchazelle & 1.96 & 1.20 \\
\bami & 1.81 & 2.07 \\
\fastfood & 1.76 & 1.10 \\
\frikadel & 1.64 & 1.67 \\
\straight & 1.41 & 2.45 \\
\hline
\end{tabular}
\end{sc}
\end{subtable}
\begin{subtable}{.5\linewidth}
\centering
\begin{sc}
\begin{tabular}{|l|c|c|}\hline
Algo & GPU (V100) ($R$) & TPU (v2) ($R$) \\
\hline
\multicolumn{3}{|c|}{$H_N$ $\to$ FFT} \\
\hline
\bami & 1.98 & 1.00 \\
\frikadel & 1.64 & 1.00 \\
\hline
\multicolumn{3}{|c|}{$H_N$ $\to$ $Q$} \\
\hline
\bami & 1.45 & 1.00 \\
\frikadel & 1.44 & 1.00 \\
\hline
\end{tabular}
\end{sc}
\end{subtable}
\end{table}

In this section, we analyze the impact of the design choices presented in Sec.~\ref{sec:design_space} on both sketching quality and performance.
Regarding \textbf{sketching quality}, we find that small values of $D$ often lead to remarkably accurate reconstructions of influence scores. For TDA scores, achieving correlation $r\ge 0.95$ requires the following dimensions: \ailonchazelle, \fastfood, \bami: $D=2^{10}$; \frikadel: $D=2^{12}$;
\straight: $D=2^{14}$. To reach $r\ge 0.99$, simply increase each dimension by a factor of $8$.
While memory limitations prevented sketching HVPs with \ailonchazelle\ and \fastfood\ for eigenvalue estimation, the other algorithms scaled well. We achieved a relative mean absolute error below 5\% with the following dimensions: \bami: $D=2^{10}$; \frikadel: $D=2^{12}$; \straight: $D=2^{13}$. Note that \straight\ requires larger $D$, consistent with the theoretical comparisons in Theorems~\ref{thm:qk_theoretical_guaranteed} and~\ref{thm:bami_theoretical_guaranteed} (Sec. \ref{sec:design_space}). Regarding \textbf{perfomance}, we outline here the key findings and refer
the reader to Appendix~\ref{appx:add-experiments} for a comprehensive analysis across design choices. First, \emph{removing look-ups significantly reduces wall-time on TPUs, while on GPUs it substantially lowers peak memory usage} (reductions of 2.4x for \ailonchazelle\ to \frikadel, and 1.3x for \fastfood\ to \bami), see Table~\ref{tab:lookup_removal}. Second, \emph{explicit sketching consistently provides substantial speed-ups} (see Table~\ref{tab:gains_design_choices}); we conjecture this is due to the fact that implicit sketching results in more memory accesses. Finally ,while modifying the pre-conditioner doesn't affect TPU performance, \emph{it significantly improves GPU performance}, see Table~\ref{tab:gains_design_choices} under the headings $H_N$ $\to$ FFT and $H_N$ $\to$ $Q$.

\subsection{Estimating the intrinsic dimension.}
\label{sec:exp_intrinsic_dimension}

Our experiments evaluate the efficiency and accuracy of our intrinsic dimension estimation algorithm (presented in Sec.~\ref{sec:applications}). We consider two experimental setups: classification, where we fine-tune Roberta on SNLI with accuracy as the target metric;
generation, where we fine-tune BART on XSUM for text summarization, using Rouge1 and Rouge2 for evaluation. We employ three projection algorithms: $\fastfood$, $\frikadel$ (FFT-based variant), and  $\straight$. We first \textbf{validate algorithm consistency} by
demonstrate that the algorithm produces consistent estimates of the intrinsic dimension across multiple runs; we repeat each search experiment with three random seeds obtaining estimates within a factor of 2 (Appendix~\ref{appx:impl-details}). We then \textbf{verify that the estimated $D^*$ accurately represents the intrinsic dimension $D_{int}$} by fine-tuning the model in a $D=D^*/2$-dimensional subspace
and ensuring that the target metric remains below the $\tau_{90}$ threshold (details in Appendix~\ref{appx:impl-details}).
We point out that selecting good search parameters $(c,\delta)$ was relatively straightforward
by observing target metric improvement during full fine-tuning . Regarding \textbf{computational efficiency, our approach requires significantly fewer steps than binary or brute-force search}. For instance, the XSUM search required only twice the steps of a full fine-tuning run and the same amount of steps for the classification task. We finally look at  \textbf{how the intrinsic dimension varies between classification and generation tasks}. For classification, our findings are consistent with prior work~\cite{intlm-Aghajanyan2021-wc}, were all algorithms yield $D_{int}=2^{13}\ll N$. For generation we first observe that the results
are \textbf{metric-dependent}:  for Rouge1, \fastfood\ and \straight\ estimate $D_{int}=2^{25}$ while \frikadel\ yields $D=2^{24}$;
for Rouge2, $D_{int}$ equals the full model dimension $N\sim 2^{27}$. In both cases, however, the intrinsic dimension is close to the
full model dimension: this challenges assumptions about intrinsic dimension in the context of generative tasks. While prior studies~\cite{intrinsic_dim_2018, intlm-Aghajanyan2021-wc, lotfi2022pacbayes} focused on classification with generalization bounds of the form 
$O(\sqrt{D_{int}})$, our results indicate that: \textbf{generative tasks may exhibit higher intrinsic dimension and 
generative models may require new non-linear projections to uncover significantly lower intrinsic dimensions}.

\subsection{Hessian Analysis During Pre-trained Language Model Fine-Tuning}
\label{subsec:eigen_evol}
\begin{figure}
\begin{subfigure}[b]{0.50\linewidth}
\centering
\includegraphics[width=\linewidth]{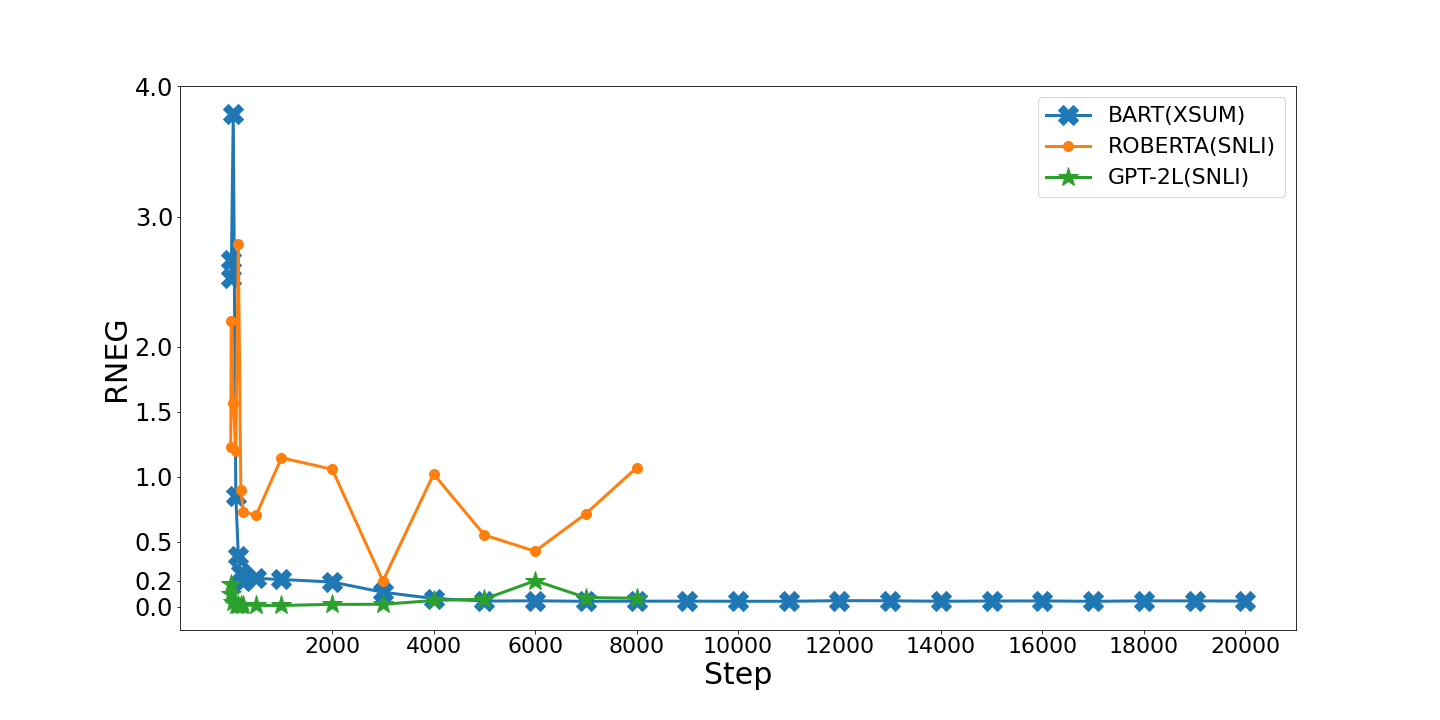}
\caption{}
\label{fig:eigen_neg}
\end{subfigure}
\begin{subfigure}[b]{0.50\linewidth}
\centering
\includegraphics[width=\linewidth]{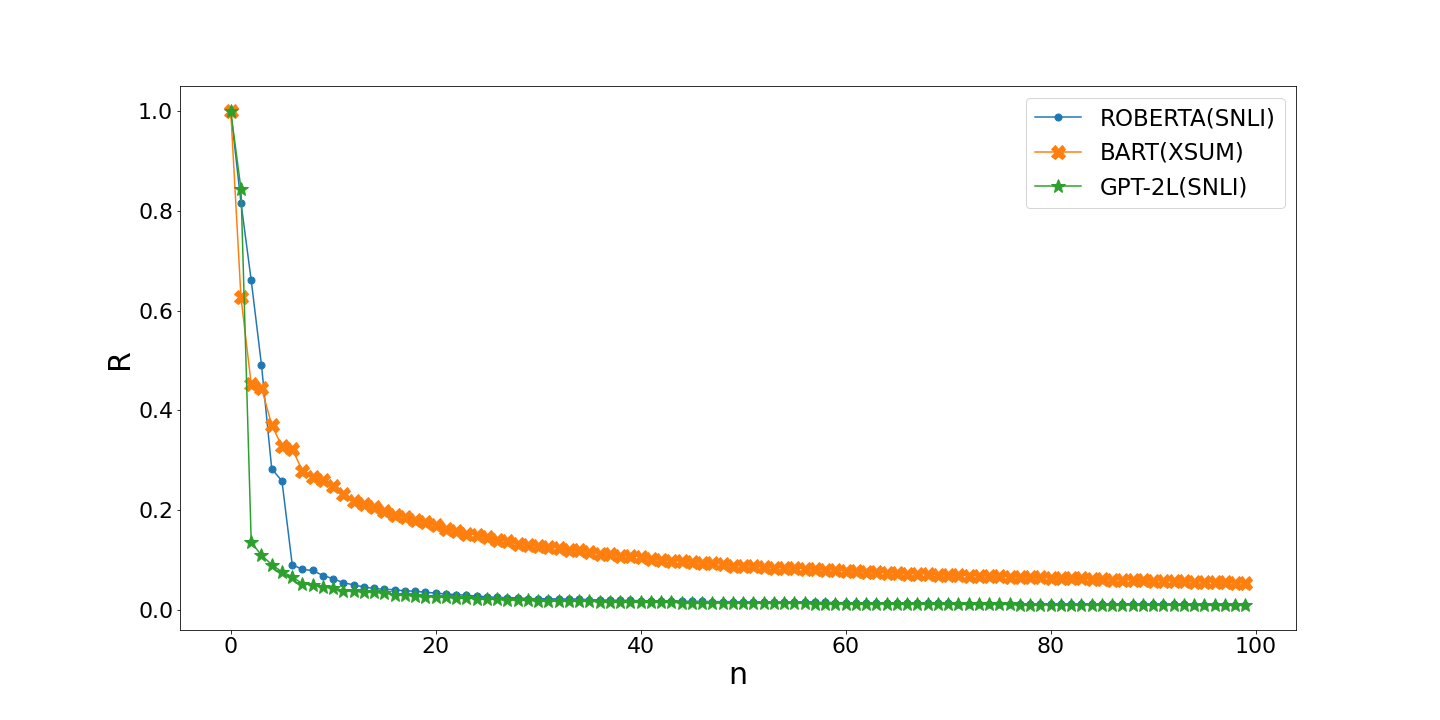}
\caption{}
\label{fig:eigen_collapse}
\end{subfigure}
\caption{\textbf{left}: ratio (RNEG) of the absolute value of the top negative to the top positive eigenvalue; \textbf{right}: ratio $R$ of the $n$-th largest positive eigenvalue to the largest positive eigenvalue. We define outliers when $R>20\%$, motivated by~\cite[Fig.2]{behrooz-eigens}. Higher-resolution
versions for printing can be found in Appendix~\ref{appx:add-experiments}. These results disprove conjectures on the Hessian structure, see Sec.~\ref{subsec:eigen_evol}.}
\label{fig:eigvals}
\end{figure}

In this section, we examine the Hessian's evolution during pre-trained language model fine-tuning.  Using sketching, as proposed in Section \ref{sec:applications}, we estimate the Hessian's eigenvalues and eigenvectors.  We fine-tune Roberta and GPT-2L on SNLI (3 classes) and BART on XSUM, employing \bami, with a target dimension $D=2^{20}$, to construct a 1k-dimensional Krylov subspace. This substantially extends the work of \cite{behrooz-eigens} who considered smaller models and smaller subspaces (90-dimensional Krylov subspace). Spectrum estimation is performed every 1k steps, including initial steps $\{0, 10, 50, 100, 150, 200, 250, 500\}$.  Our goal is to see if observations from previous studies with smaller networks hold in this context:
\begin{itemize}
\item \textbf{Obs1}: Negative eigenvalues gradually disappear during training \cite{sagun2018empirical, behrooz-eigens}.
\item \textbf{Obs2}: $K-1$ outlier eigenvalues emerge for $K$-class classification, with the gradient aligning to their corresponding subspace \cite{tiny_gurari2018gradient, sagun2018empirical, sgd-outlier-alignment-theoretical}. Moreover, these outliers, which hinder optimization, stem from training without normalization \cite{behrooz-eigens}.
\end{itemize}
We find that these obervations don't fully translate to pre-trained language model fine-tuning.
Regarding \textbf{Obs1}, we compute the ratio (RNEG) of the absolute values of the top negative and positive eigenvalues; RNEG 
shows inconsistent behavior across models (Fig.\ref{fig:eigvals}~(\subref{fig:eigen_neg})). Roberta maintains a higher RNEG, while GPT-2L and BART see it diminish over time. Regarding \textbf{Obs2}, outlier eigenvalues don't strictly adhere to the $K-1$ rule. Roberta has more outliers ($6$), and the gradient-outlier alignment is less pronounced (27\% Roberta, 35\% GPT-2L, 8\% BART) compared to smaller networks~\cite{behrooz-eigens, tiny_gurari2018gradient}. See Fig.~\ref{fig:eigvals}~(\subref{fig:eigen_collapse}). Moreover, outliers emerge despite layer normalization.
\section{Conclusions and Limitations}
\label{sec:limitations}
In this work, we have dissected the theoretical and practical limitations of existing gradient sketching techniques when applied to modern neural networks and accelerators.  Our analysis motivated the design of novel sketching algorithms, for which we established theoretical guarantees; additionally, we exposed limitations in the theoretical underpinnings of the Fastfood transform.  These methods, along with refined intrinsic dimension estimation and Hessian eigenvalue computation, provide an efficient toolkit for model analysis. We successfully apply this toolkit to pre-trained language models, revealing the need to rethink layer-selection-based influence functions, the high intrinsic dimensionality of a generative task, and the deviation of LLMs' Hessian spectra from what observed in smaller networks. While in Sec.~\ref{sec:exp_intrinsic_dimension} we exhibit an example of a generative task with a large intrinsic dimension, we leave an in-depth study for future work. We tested the efficiency of our sketching algorithms with Transformers in Sec.~\ref{sec:exp_design_choices}, but results might vary for other model architectures.
\newpage
\section*{Acknowledgements}
We thank Jonathan Heek for helping with optimizing code for TPUs and Katja Filippova for feedback on the draft.

\bibliographystyle{plain}
\bibliography{fastfoodif}

\newpage
\appendix
\addtocontents{toc}{\protect\setcounter{tocdepth}{2}}
\tableofcontents

\section{Appendix: Additional Experimental Results}
\label{appx:add-experiments}
\subsection{Additional results on layer selection}
In Table~\ref{tab:layer_sel_bad_full} we include the full results of layer selection of GPT-2 and BART: in
Sec.~\ref{sec:layer_selection_limitations} we restricted the discussion to GPT-2 because of space constraints;
for the purpose of this experiment we consider the setup of~\cite{influence-theory23}:
NLP tasks which consists in
fine-tuning GPT-2 on the WikiText-103 dataset and BART and zsRE.
\begin{table}
\caption{Most of the time layer selection results in unreliable estimates for influence
scores and eigenvalue estimation. The best layer correlation with ground truth influence scores does not exceed $\simeq 90\%$
and is quite low for most layers. The relative error in eigenvalue prediction is always at least $\simeq 20\%$.}
\label{tab:layer_sel_bad_full}
\vskip 0.15in
\begin{center}
\begin{small}
\begin{sc}
\begin{tabular}{|llrr|}
\hline
      Model &     layer &    r & eig.~err. \\
\hline
GPT-2 & Tok. Emb. & 0.16 & 0.72 \\
GPT-2 & 1 & 0.75 & 0.24 \\
GPT-2 & 2 & 0.89 & 0.31 \\
GPT-2 & 3 & 0.90 & 0.19 \\
GPT-2 & 4 & 0.89 & 0.24 \\
GPT-2 & 5 & 0.78 & 0.37 \\
GPT-2 & 6 & 0.38 & 0.40 \\
\hline
BART & Tok. Emb. & 0.54 & 0.20 \\
BART & Dec. 1 & 0.62 & 0.39 \\
BART & Dec. 2 & 0.88 & 0.43 \\
BART & Dec. 3 & 0.73 & 0.28 \\
BART & Dec. 4 & 0.91 & 0.28 \\
BART & Dec. 5 & 0.84 & 0.19 \\
BART & Dec. 6 & 0.70 & 0.18 \\
BART & Enc. 1 & 0.41 & 0.50 \\
BART & Enc. 2 & 0.45 & 0.59 \\
BART & Enc. 3 & 0.56 & 0.48 \\
BART & Enc. 4 & 0.71 & 0.40 \\
BART & Enc. 5 & 0.91 & 0.46 \\
BART & Enc. 6 & 0.89 & 0.16 \\
\hline
\end{tabular}
\end{sc}
\end{small}
\end{center}
\vskip -0.1in
\end{table}

\subsection{Quality of sketches for inner products and eigenvalue estimation}
For each algorithm, we report in Table~\ref{tab:gradient_accuracy} the minimal value of $\log_2 D$ necessary to reach
a Pearson correlation $>0.9x$ with the ground truth when estimating inner products of gradients using sketches.
As expected from the worse concentration bound, \straight\ requires a larger dimension. We conjecture that the fact that \fastfood\
is effective might be due to the gradient distribution giving $0$-measure to the inputs that
\fastfood\ would fail to sketch (Theorem~\ref{thm:fastfood_bad_inputs}).
For \bami, \frikadel\ and \straight\ we report in Table~\ref{tab:hessian_accuracy} the minimal value of $\log_2 D$ necessary to reach
a relative mean absolute error $err<x$ when estimating the top 10 eigenvalues of the Hessian.
\begin{table}
\caption{For each algorithm the minimal value of $\log_2 D$ necessary to reach a Pearson $r >x$ where $x=0.9\{5,8,9\}$}
for estimating inner products of gradients.
\label{tab:gradient_accuracy}
\vskip 0.15in
\begin{center}
\begin{small}
\begin{sc}
\begin{tabular}{|lrrr|}
\hline
Algo & $r>0.95$ & $r>0.98$ & $r>0.99$ \\
\hline
\ailonchazelle & 10 & 12 & 13 \\
\bami & 10 & 12 & 13 \\
\frikadel & 12 & 13 & 15 \\
\straight & 14 & 16 & 17 \\
\tfastfood & 10 & 12 & 14 \\
\hline
\end{tabular}
\end{sc}
\end{small}
\end{center}
\vskip -0.1in
\end{table}

\begin{table}
\caption{For each algorithm the minimal value of $\log_2 D$ necessary to reach a relative error $err < x$ where $x=0.2, 0.1, 0.05$ in reconstructing the top 10 eigenvalues.}
\label{tab:hessian_accuracy}
\vskip 0.15in
\begin{center}
\begin{small}
\begin{sc}
\begin{tabular}{|lrrr|}
\hline
Algo & $err<0.2$ & $err<0.1$ & $err<0.05$ \\
\hline
\bami & 9 & 10 & 10 \\
\frikadel & 12 & 12 & 12 \\
\straight & 13 & 13 & 13 \\
\hline
\end{tabular}
\end{sc}
\end{small}
\end{center}
\vskip -0.1in
\end{table}

\subsection{A closer look at \ailonchazelle\ vs \frikadel}
In Sec.~\ref{sec:exp_design_choices} we pointed out that the \ailonchazelle's wall time and memory usage
increase with the target dimension $D$. We compare the peak memory usage and the wall time of \ailonchazelle\ 
with that of \frikadel\ on the inner product task of Sec.~\ref{sec:exp_design_choices} on GPU (V100),
see Figures~\ref{fig:fjl_vs_afjl_memory},\ref{fig:fjl_vs_afjl_wall}: \ailonchazelle's cost significantly increases with $D$ and does not scale
beyond  $D=2^{20}$.

\begin{figure}
\centering
\includegraphics[width=\textwidth]{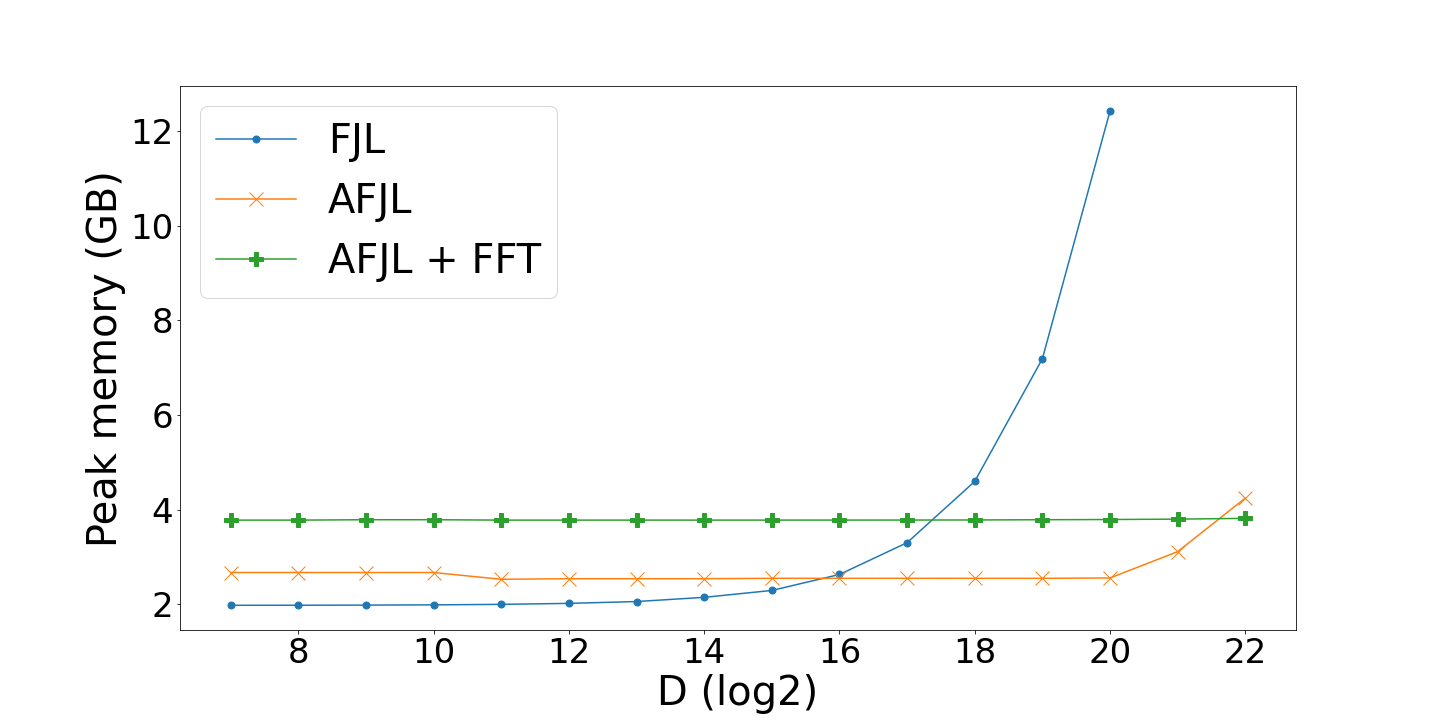}
\caption{Peak memory usage comparing \ailonchazelle\ with \frikadel. 
Results on GPU (V100); for \ailonchazelle\ results with $D>2^{20}$ are not reported as there were Out-of-Memory errors.}
\label{fig:fjl_vs_afjl_memory}
\end{figure}
\begin{figure}
\centering
\includegraphics[width=\textwidth]{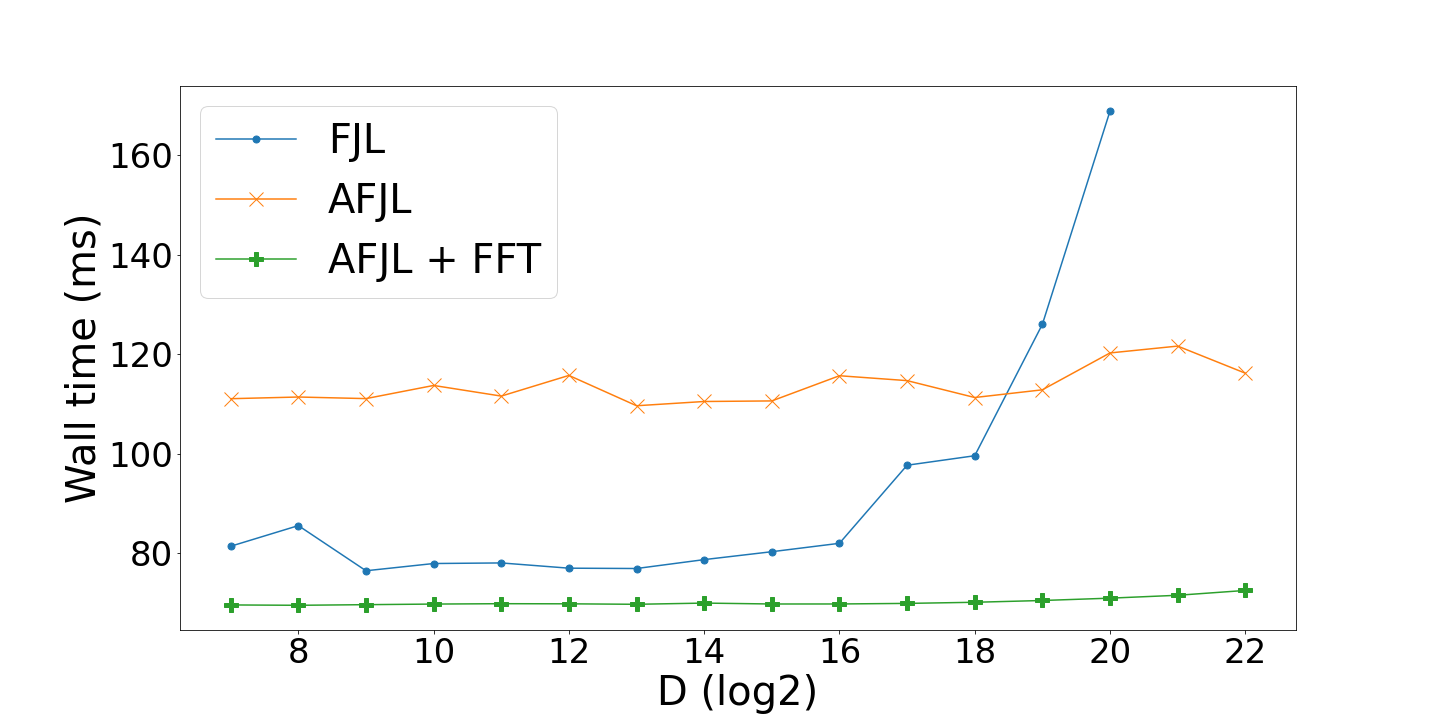}
\caption{Wall time comparing \ailonchazelle\ with \frikadel. 
Results on GPU (V100); for \ailonchazelle\ results with $D>2^{20}$ are not reported as there were Out-of-Memory errors.}
\label{fig:fjl_vs_afjl_wall}
\end{figure}

\subsection{Compute cost of sketching gradients}
In Table~\ref{tab:gradient_computation_full} we report the 
compute costs of sketching gradients in the setup of Sec.~\ref{sec:exp_design_choices}. 
\begin{table}
\caption{Compute costs of sketching gradients in the setup of Sec.~\ref{sec:exp_design_choices}. 
I=1 denotes that a method is implicit, F=1 that the FFT is used as a pre-conditioner
instead of the Walsh-Hadamard Transform, and Q=1 that random orthogonal matrices decomposed as a Kronecker product are used as
a pre-conditioner The lowest time and memory
costs are in blue and bold and the highest ones in red and italic. nan indicates a result that was not computed because
of XLA compilation errors. For this benchmark we considered a target dimension $D$ in the range $[2^{17}, 2^{22}]$.
GPU is V100, TPU is TPUv2.}
\label{tab:gradient_computation_full}
\vskip 0.15in
\begin{center}
\begin{small}
\begin{sc}
\begin{tabular}{|lrrr|rrrr|}
\hline
Algo & I & F & Q & GPU wall (ms) & GPU mem (GB) & TPU wall (ms) & TPU mem (GB) \\
\hline
\ailonchazelle & 0 & 0 & 0 & 123 & \cellcolor{red} \itshape \color{white} 6.9 & 8997 & 3.0 \\
\ailonchazelle & 0 & 1 & 0 & 85 & 6.1 & 9271 & 4.4 \\
\ailonchazelle & 1 & 0 & 0 & 244 & 6.7 & \cellcolor{red} \itshape \color{white} 10958 & 3.3 \\
\ailonchazelle & 1 & 1 & 0 & 164 & 6.1 & nan & nan \\
\bami & 0 & 0 & 0 & 205 & 3.2 & 134 & 2.8 \\
\bami & 0 & 0 & 1 & 142 & 3.2 & 134 & 2.8 \\
\bami & 0 & 1 & 0 & 104 & 4.2 & 134 & 2.8 \\
\bami & 1 & 0 & 0 & \cellcolor{red} \itshape \color{white} 454 & 3.2 & 278 & 3.0 \\
\bami & 1 & 0 & 1 & 237 & 3.2 & 278 & 3.0 \\
\bami & 1 & 1 & 0 & 126 & 4.2 & 278 & 3.0 \\
\fastfood & 0 & 0 & 0 & 197 & 4.1 & 8694 & 4.3 \\
\fastfood & 0 & 1 & 0 & 140 & 5.6 & nan & nan \\
\fastfood & 1 & 0 & 0 & 352 & 4.2 & 9600 & \cellcolor{red} \itshape \color{white} 4.7 \\
\fastfood & 1 & 1 & 0 & 239 & 4.8 & nan & nan \\
\frikadel & 0 & 0 & 0 & 116 & 2.9 & 89 & 2.7 \\
\frikadel & 0 & 0 & 1 & 81 & 2.9 & 89 & 2.7 \\
\frikadel & 0 & 1 & 0 & \cellcolor{blue} \bfseries \color{white} 71 & 3.8 & 89 & 2.7 \\
\frikadel & 1 & 0 & 0 & 231 & 3.0 & 149 & 2.8 \\
\frikadel & 1 & 0 & 1 & 121 & 3.1 & 149 & 2.8 \\
\frikadel & 1 & 1 & 0 & 88 & 4.1 & 149 & 2.8 \\
\straight & 0 & 0 & 1 & 82 & \cellcolor{blue} \bfseries \color{white} 1.7 & \cellcolor{blue} \bfseries \color{white} 64 & \cellcolor{blue} \bfseries \color{white} 1.1 \\
\straight & 1 & 0 & 1 & 116 & 1.7 & 157 & 1.4 \\
\hline
\end{tabular}
\end{sc}
\end{small}
\end{center}
\vskip -0.1in
\end{table}

\subsection{Compute cost of sketching HVPs}
In Table~\ref{tab:hess_computation} we report the 
compute costs of sketching HVPs in the setup of Sec.~\ref{sec:exp_design_choices}. 
\begin{table}
\caption{Compute costs of HVP using sketching. I=1 denotes that a method is implicit, F=1 that the Fourier Transform is used instead of the
Walsh-Hadamard Transform and Q=1 that random orthogonal matrices decomposed as a Kronecker product are used instead of Hadamard
matrices. The lowest time and memory
costs are in blue and bold and the highest ones in red and italic. GPU is V100, TPU is TPUv2.}
\label{tab:hess_computation}
\vskip 0.15in
\begin{center}
\begin{small}
\begin{sc}
\begin{tabular}{|lrrr|rrrr|}
\toprule
Algo & I & F & Q & GPU wall (ms) & GPU mem (GB) & TPU wall (ms) & TPU mem (GB) \\
\hline
\bami & 0 & 0 & 0 & 397 & 2.8 & 158 & 2.7 \\
\bami & 0 & 0 & 1 & 183 & 2.8 & 158 & 2.7 \\
\bami & 0 & 1 & 0 & 111 & 3.3 & 158 & 2.7 \\
\bami & 1 & 0 & 0 & \cellcolor{red} \itshape \color{white} 627 & 3.1 & \cellcolor{red} \itshape \color{white} 268 & 3.6 \\
\bami & 1 & 0 & 1 & 238 & 3.1 & 267 & 3.6 \\
\bami & 1 & 1 & 0 & 137 & 3.7 & 268 & \cellcolor{red} \itshape \color{white} 3.6 \\
\frikadel & 0 & 0 & 0 & 230 & 2.7 & 108 & 2.6 \\
\frikadel & 0 & 0 & 1 & 121 & 2.7 & 108 & 2.6 \\
\frikadel & 0 & 1 & 0 & \cellcolor{blue} \bfseries \color{white} 92 & 3.3 & 108 & 2.6 \\
\frikadel & 1 & 0 & 0 & 345 & 3.0 & 191 & 3.5 \\
\frikadel & 1 & 0 & 1 & 150 & 3.0 & 190 & 3.5 \\
\frikadel & 1 & 1 & 0 & 111 & \cellcolor{red} \itshape \color{white} 3.8 & 191 & 3.5 \\
\straight & 0 & 0 & 1 & 118 & \cellcolor{blue} \bfseries \color{white} 1.6 & \cellcolor{blue} \bfseries \color{white} 82 & \cellcolor{blue} \bfseries \color{white} 1.3 \\
\straight & 1 & 0 & 1 & 146 & 1.9 & 161 & 1.7 \\
\bottomrule
\end{tabular}
\end{sc}
\end{small}
\end{center}
\vskip -0.1in
\end{table}

\subsection{Search for the intrinsic dimension}
In Table~\ref{tab:search_intrinsic_dimension} we report the values of $D^*$ across the
3 seeds used in each search experiment. We see good agreement within a factor of $2$.
For SNLI the target accuracy to exceed was $\tau_{90} = 80.1\%$; for XSUM the value of Rouge1 to
exceed was $36.6$ while that of Rouge2 was $15.69$. In the case of Rouge2 compressing BART in
half would lead to a search with $D=2^{26}$; in that case the final values of Rouge2 did not
exceed $14.4$, so it stayed well-below the required threshold that defines the intrinsic dimension $D_{int}$ using a $90\%$ target
value of the metric obtained by fine-tuning the full model.
\begin{table}
\caption{Values of $D^*$ returned by the search the intrinsic dimension $D_{int}$ using 3 different seeds.
This shows the stability of our algorithm which doubles the dimension of the fine-tuning subspace after some compute budget 
if the target metric has not improved enough.}
\label{tab:search_intrinsic_dimension}
\vskip 0.15in
\begin{center}
\begin{small}
\begin{sc}
\begin{tabular}{|lllr|}
\hline
      Task &     Metric & Projection Algorithm   & $D^*$ \\
\hline
      SNLI &    accuracy & \tfastfood & $2^{\{14, 13, 13\}}$ \\
      SNLI &    accuracy & \frikadel & $2^{\{14, 13, 13\}}$ \\
      SNLI &   accuracy & \straight & $2^{\{13, 13, 13\}}$ \\
\hline
XSUM & Rouge1 & \tfastfood & $2^{\{26, 25, 25\}}$ \\
XSUM & Rouge1 & \frikadel & $2^{\{24, 24, 25\}}$ \\
XSUM & Rouge1 & \straight & $2^{\{26, 25, 26\}}$ \\
\bottomrule
\end{tabular}
\end{sc}
\end{small}
\end{center}
\vskip -0.1in
\end{table}

\subsection{Higher resolution version of Fig.~\ref{fig:eigvals}}
Figs.~\ref{fig:neg-eigvals-highres} and~\ref{fig:eigen_collapse_highres} are high-resolution versions of Fig.~\ref{fig:eigvals}.
\begin{figure}
\centering
\includegraphics[width=\linewidth]{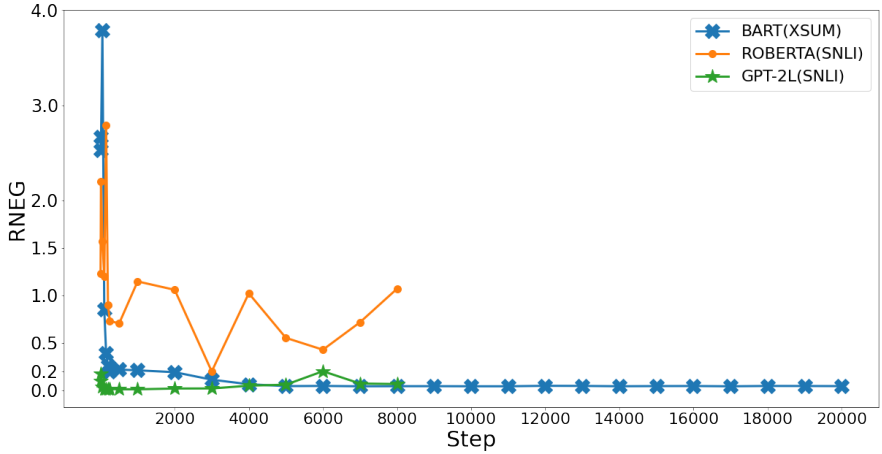}
\caption{ratio (RNEG) of the absolute value of the top negative to the top positive eigenvalue}
\label{fig:neg-eigvals-highres}
\end{figure}
\begin{figure}
\centering
\includegraphics[width=\linewidth]{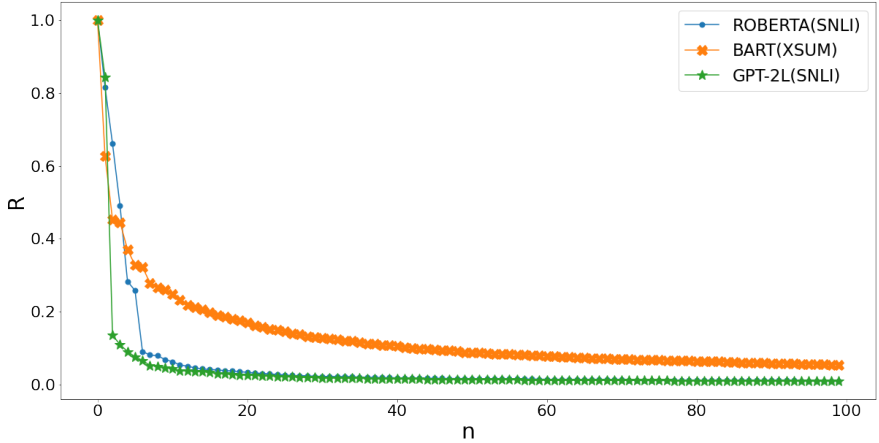}
\caption{ratio $R$ of the $n$-th largest positive eigenvalue to the largest positive eigenvalue. We define outliers when $R>20\%$, motivated by~\cite[Fig.2]{behrooz-eigens}}
\label{fig:eigen_collapse_highres}
\end{figure}

\subsection{Comparison to on-the-fly dense random projections}
\cite{park2023trakattributingmodelbehavior}~proposes to materialize the full random projection on-the-fly in chunks (we will refer to this sketching method as TRAK,
even though the full TRAK attribution method in~\cite{park2023trakattributingmodelbehavior} is more involved). 
This leads to two significant drawbacks: (1) the run-time scales linearly with the target dimension (memory traded off with compute), and (2) specialized kernels are necessary for efficient implementation, with unclear applicability to TPUs. Our attempts to implement a competitive version using pure JAX were unsuccessful due to the lack of control over memory allocation and placement. We have included a plot (Figure~\ref{fig:trak_bad}) and a table (Table~\ref{tab:trak_bad}) demonstrating the linear runtime growth of TRAK compared to the constant runtimes of \bami\ and \straight. The implementation challenges of TRAK are further highlighted by the fact that our Triton implementation does not outperform the original CUDA kernel released by the TRAK authors, underscoring the difficulty of efficiently implementing random projections in chunks.

\begin{figure}
\centering
\includegraphics[width=\textwidth]{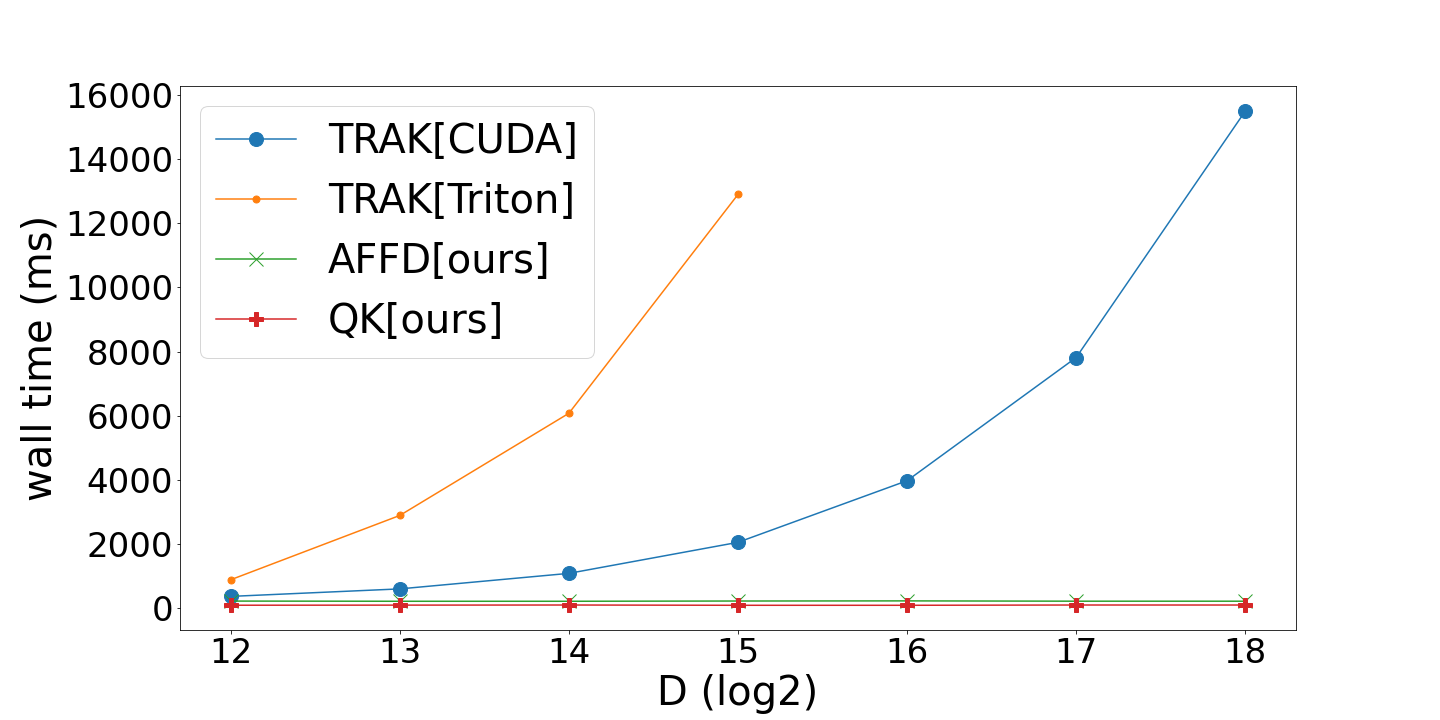}
\caption{Our methods exhibit constant wall time with respect to the target dimension $D$. In contrast, TRAK's runtime increases with the target dimension. Efficient implementation of dense random projections with recomputed projectors is non-trivial; compare the performance difference between TRAK[CUDA] and TRAK[Triton]. TRAK[CUDA] utilizes the CUDA kernel provided by the original TRAK authors~\cite{park2023trakattributingmodelbehavior}.}
\label{fig:trak_bad}
\end{figure}

\begin{table}
\caption{
Wall-time $T$ (ms) Comparison: Our Methods vs. On-the-fly Dense Projections (TRAK) using a V100 GPU.
TRAK requires custom kernels and is thus restricted to GPU computation. Our methods exhibit constant runtime with respect to the target dimension, whereas TRAK's runtime increases substantially as the target dimension grows.}
\label{tab:trak_bad}
\begin{center}
\begin{sc}
\begin{tabular}{l|rrrr}
$\log_2(D)$ & TRAK[CUDA] & TRAK[Triton] & \bami[ours] & \straight[ours] \\
\hline
12 & 359 & 879 & 213 & 84 \\
13 & 594 & 2885 & 207 & 87 \\
14 & 1078 & 6073 & 207 & 91 \\
15 & 2047 & 12900 & 215 & 82 \\
16 & 3969 & - & 219 & 82 \\
17 & 7812 & - & 210 & 91 \\
18 & 15500 & - & 209 & 90 \\
\hline
\end{tabular}
\end{sc}
\end{center}
\end{table}

\section{Appendix: Implementation details}
\label{appx:impl-details}
\subsection{Libraries, Compute resources and implementation of \fastfood\ and \ailonchazelle}
We use Jax and Hugging Face libraries; experiments in Sec.~\ref{sec:layer_selection_limitations} were carried out
using one GPU V100 or a TPUv2 (8 cores). Experiments in Sec.~\ref{sec:exp_intrinsic_dimension} used 2 V100s in the
classification setting and 2 A100s in the generation setting. Experiments in Sec.~\ref{subsec:eigen_evol} used 2 A100s.
Experiments were performed on cloud infrastructure and the virtual machines employed for each experiment had at most 32GB of RAM.
We used the Jax profiler tool to extract information about peak memory usage and wall time. We checked that under multiple
runs the wall time estimates reported by the profiler tool stay within a 10\% relative error, while the peak memory usage does not significantly change.
For \ailonchazelle\ and
\fastfood\ the lookup operation is implemented as in previous work~\cite{intlm-Aghajanyan2021-wc} storing
the permutations or entries to sample using a vector. Regarding permutations there is an important implementation detail: if one wants
the implementation of \fastfood\ to give the same results in implicit and explicit mode the permutation used needs to be inverted
between the two setups. For \ailonchazelle, in implicit mode the lookup operation needs to be transposed and is implemented as an
XLA scatter-add.

\subsection{HVP in Implicit vs Explicit Form}
The implicit implementation of the HVP is easy; one changes the loss to be defined on the
image of the projection $\Phi$ as follows:
\begin{equation}\label{eq:implicit_loss}
    L_{implicit} (\omega) = L(\theta_0 + \Phi^T\omega),
\end{equation}
and then one computes the HVP at the origin $\omega=0$. For the explicit form of the HVP, one takes a vector $\nu$ in the image of $\Phi$
and lifts it to the vector $\Phi^T\nu$ in the parameter space; then one computes the standard HVP for $\Phi^T\nu$ and applies $\Phi$ to the result
obtaining again a vector in the image of $\Phi$.

\subsection{Step-by-step didactic implementation}
We describe a step-by-step didactic implementation in Jax and Flax.

First, we rely on the following modules.

\begin{python}
import jax
import jax.numpy as jnp
from typing import Sequence, Tuple
from flax.core import apply
from flax.core import init
from flax.core import nn
import scipy
import functools
\end{python}

The core subroutine is applying pre-conditioners using Kronecker products. One
can use the Jax' einsum operation to write it in a few lines of code.

\begin{python}[caption=Kronecker Implemetation, label=snippet:kronecker]
def compute_kronecker_shapes(*, dimension: int) -> Tuple[int, ...]:
  """Computes shapes to decompose Kronecker products."""

  # For realistic use cases, bump it up, e.g. 1024
  max_block_size = 32

  n = dimension

  # Divide n into block sizes
  shape = []
  while n > 1:
    shape.append(min(n, max_block_size))
    n //= max_block_size
  shape.reverse()
  return tuple(shape)

def kronecker_product(
    *, x: jnp.ndarray, matrices: Sequence[jnp.ndarray]
) -> jnp.ndarray:
  """Performs kronecker product using Jax einsum."""

  shape = tuple(map(lambda x: x.shape[0], matrices))

  y = x.reshape(shape)

  num_dims = len(shape)
  # Einsum iterative implementation.
  for i, m in enumerate(matrices):
    y_dims = ''.join(str(j) for j in range(num_dims))
    h_dims = f'{i}{num_dims + 1}'
    out_dims = y_dims.replace(str(i), str(num_dims + 1), 1)
    operands = f'{y_dims},{h_dims}->{out_dims}'
    y = jnp.einsum(operands, y, m)
  return y.flatten()
\end{python}

The next step is implementation of \straight\ which is quite straightforward.

\begin{python}[caption=\straight\ Implemetation, label=snippet:qk]
def q_init(rng, shape, dtype=jnp.float32):
  """Random orthogonal matrix initializer."""

  x = jax.random.normal(rng, shape=shape)
  q, _ = jnp.linalg.qr(x, mode='complete')
  return q.astype(dtype)

# target_dim = D in paper
# input_dim = N in paper
# vec is the N-vector to sketch to a D-vector.
def qk(scope, vec, target_dim, input_dim):
  """Implements QK."""

  shapes = compute_kronecker_shapes(dimension=input_dim)
  sigma = jnp.sqrt(input_dim/target_dim)
  params = []
  for i, s in enumerate(shapes):
    p = scope.param(f'q_{i}', q_init, shape=(s, s))
    params.append(p)
  return sigma * kronecker_product(x=vec, matrices=params)[:target_dim]
\end{python}

For using \straight\ in implicit mode we need to transpose \straight; we illustrate
how this can be carried out in Flax:

\begin{python}[caption=Transpose of \straight, label=snippet:qk_transpose]
# The tranpose of QK.
# vec is a D-vector to lift to an N-vector.
def qk_transpose(scope, vec, target_dim, input_dim):

  shapes_1 = compute_kronecker_shapes(dimension=input_dim)
  shapes_2 = compute_kronecker_shapes(dimension=target_dim)
  sigma = jnp.sqrt(input_dim/target_dim)

  params = []
  for i, (s_1, s_2) in enumerate(zip(shapes_1, shapes_2)):
    p = scope.param(f'q_{i}', q_init, shape=(s_1, s_1))
    params.append(p.T[:s_2])
  return sigma * kronecker_product(x=vec, matrices=params)
\end{python}

The implementation of other algorithms is more challenging; we include an implementation of \bami;
it is easy to figure out the implementations of the others (note that for \fastfood\ one needs to start with the
implementation of the transpose because \fastfood\ is defined as a random feature generator).

\begin{python}[caption=Implementation of \bami, label=snippet:affd]
def init_hadamard(rng, shape: Tuple[int, int], permute_col: bool) -> Sequence[jnp.ndarray]:
  """Creates randomly permuted hadamard matrices."""

  matrix = jnp.array(scipy.linalg.hadamard(shape[0]))
  pi = jax.random.permutation(rng, shape[0])
  if permute_col:
    matrix = matrix.at[:, pi].get()
  else: # permute rows
    matrix = matrix.at[pi, :].get()

  return matrix
  
# target_dim = D in paper
# input_dim = N in paper
# vec is the N-vector to sketch to a D-vector.
def affd(scope, vec, target_dim, input_dim):
  """Implements AFFD."""

  shapes = compute_kronecker_shapes(dimension=input_dim)
  sigma = jnp.sqrt(input_dim/target_dim)

  def init_ber(rng, shape, dtype=jnp.int32):
    return jax.random.choice(rng, jnp.array([-1, 1], dtype='int32'),
                             shape)

  b = scope.param('B', init_ber, (input_dim,))
  vec = vec * b
  h_1_params = []
  for i, s in enumerate(shapes):
    h_1 = scope.param(f'H_1_{i}', init_hadamard, shape=(s, s),
                      permute_col=False)
    h_1_params.append(h_1)

  vec = kronecker_product(x=vec, matrices=h_1_params) / jnp.sqrt(input_dim)

  def init_gauss(rng, shape, dtype=jnp.float32):
    return jax.random.normal(rng, shape=shape, dtype=dtype)

  g = scope.param('G', init_gauss, (input_dim,))
  vec = vec * g

  h_2_params = []
  for i, s in enumerate(shapes):
    h_2 = scope.param(f'H_2_{i}', init_hadamard, shape=(s, s),
                      permute_col=True)
    h_2_params.append(h_2)

  vec = kronecker_product(x=vec, matrices=h_2_params) / jnp.sqrt(input_dim)

  vec = sigma * vec[:target_dim]

  return vec
\end{python}

To transpose \bami, we just need to reverse the above steps and transpose application of the Hadamard matrices.
\begin{python}[caption=Transpose of \bami, label=snippet:transpose]
# The tranpose of AFFD.
# vec is a D-vector to lift to an N-vector.
def affd_transpose(scope, vec, target_dim, input_dim):

  shapes_1 = compute_kronecker_shapes(dimension=input_dim)
  shapes_2 = compute_kronecker_shapes(dimension=target_dim)
  sigma = jnp.sqrt(input_dim/target_dim)

  h_2_params = []
  for i, (s_1, s_2) in enumerate(zip(shapes_1, shapes_2)):
    h_2 = scope.param(f'H_2_{i}', init_hadamard, shape=(s_1, s_1),
                      permute_col=True)
    h_2_params.append(h_2.T[:s_2])

  vec = kronecker_product(x=vec, matrices=h_2_params) / jnp.sqrt(input_dim)

  def init_gauss(rng, shape, dtype=jnp.float32):
    return jax.random.normal(rng, shape=shape, dtype=dtype)

  g = scope.param('G', init_gauss, (input_dim,))
  vec = vec * g

  h_1_params = []
  for i, s in enumerate(shapes_1):
    h_1 = scope.param(f'H_1_{i}', init_hadamard, shape=(s, s),
                      permute_col=False)
    h_1_params.append(h_1.T)

  vec = kronecker_product(x=vec, matrices=h_1_params) / jnp.sqrt(input_dim)

  def init_ber(rng, shape, dtype=jnp.int32):
    return jax.random.choice(rng, jnp.array([-1, 1], dtype='int32'),
                             shape)

  b = scope.param('B', init_ber, (input_dim,))
  vec = vec * b

  vec = vec * sigma

  return vec
\end{python}

We now turn to a didactic implement of sketching gradients of a loss functions in implicit and explicit mode.
We first make an assumption about the signature of loss and sketching functions

\begin{python}[label=snippet:signature_assumptions]
def loss_fn(model_params, batch):
  """Loss function signature."""
  pass

def sketch_fn(sketch_params, vec):
  """Sketch function signature."""
  pass

def transpose_sketch_fn(sketch_params, vec):
  """Transpose of sketch_fn signature."""
  pass
\end{python}

Then here's how one can implement explicit and implicit gradient sketching in a few lines of code.

\begin{python}[caption=Implementation of Implicit and Explicit Gradient Sketching, label=snippet:grad_sketching_impl]
def explicit_grad_sketch(model_params, sketch_params, batch):
  """Performs an explicit gradient sketch."""
  
  grad = jax.grad(loss_fn)(model_params, batch)
  return sketch_fn(sketch_params, grad)

def implicit_grad_sketch(model_params, sketch_params, batch, target_dim):
  """Performs an implicit gradient sketch."""

  def inner_loss_fn(omega):
    omega = transposed_sketch_fn(sketch_params, omega)
    return loss_fn(model_params + omega, batch)

  omega = jnp.zeros((target_dim,))
  grad = jax.grad(inner_loss_fn)(omega)

  return grad
\end{python}

Here's how one can do the same for the sketched HVP.
\begin{python}[caption=Implementation of Implicit and Explicit HVP Sketching, label=snippet:hvp_sketching_impl]
# Note tangent_params is a D-dimensional vector
def explicit_hvp_sketch(model_params, tangent_params, sketch_params, batch):
  """Performs an explicit HVP sketch."""
  
  tangent_params = transposed_sketch_fn(sketch_params, tangent_params)
  loss_ = functools.partial(loss_fn, batch=batch)
  grad_fn = jax.grad(loss_)
  hvp = jax.jvp(grad_fn, (model_params,), (tangent_params,))[1]
  return sketch_fn(sketch_params, hvp)

# Note tangent_params is a D-dimensional vector
def implicit_hvp_sketch(model_params, tangent_params, sketch_params, batch, target_dim):
  """Performs an implicit HVP sketch."""

  def inner_loss_fn(omega):
    omega = transposed_sketch_fn(sketch_params, omega)
    return loss_fn(model_params + omega, batch)

  omega = jnp.zeros((target_dim,))

  loss_ = functools.partial(inner_loss_fn)
  grad_fn = jax.grad(loss_)
  hvp = jax.jvp(grad_fn, (omega,), (tangent_params,))[1]
  return hvp
\end{python}

\subsection{Sketching and model parallelism}
We take the case of \bami, and show how the single device code may be lifted to code 
employing model parallelism.

First, we rely on the additional modules.

\begin{python}
from jax.sharding import NamedSharding
from jax.experimental import shard_map
from jax.sharding import Mesh
from jax.sharding import PartitionSpec as P
from jax.sharding import NamedSharding as NS
from jax import lax
from jax import tree_util as tu
import numpy as np
\end{python}

We then define the device mesh; we assume 8 cores with 4-way model parallelism and 2-way
data parallelism.

\begin{python}
mesh = Mesh(
  np.array(
    jax.devices()).reshape(2,4), ('data', 'model',))
\end{python}

We now lift initialization and application of the FlaX modules to model-parallel code:
\begin{python}[caption=Lift \bami\ to model parallel code, label=snippet:model_parallel_lift]
# target_dim = D in paper
# input_dim = N in paper
# vec is the N-vector to sketch to a D-vector.

def part_fn(pytree):
  """Partitions each parameter on the last dimension."""

  def inner_part_fn(p):
    out = (None,) * (p.ndim-1) + ('model',)
    return P(*out)
  return tu.tree_map(inner_part_fn, pytree)

def init_affd_mp(scope, vec, target_dim, input_dim):
  """Initializes AFFD for model-parallel code."""

  # bind dimensional arguments to make jax tracer happy with
  # jax.eval_shape.
  affd_init_fn = functools.partial(
    init(affd), target_dim=target_dim,
    input_dim=input_dim)
  
  _, params_shape = jax.eval_shape(affd_init_fn, rng, vec)
  params_part = part_fn(params_shape)

  # We need to redefine the input_dim because the code
  # is now executed on each model partition.
  affd_init_fn = functools.partial(
    init(affd), target_dim=target_dim,
    input_dim=input_dim // mesh.shape['model'])

  def init_fn(rng, vec):
    # different rng on each model slice
    rng = jax.random.fold_in(rng, lax.axis_index('model'))
    out, params = affd_init_fn(rng, vec)
    # The vector output is fully replicated and we need to sum
    # on the 'model' partitions
    out = lax.psum(out, axis_name='model')
    return out, params

  return shard_map.shard_map(
    init_fn,
    mesh=mesh,
    in_specs=(P(None,), part_fn(vec)),
    out_specs=(P(None,), params_part),
    )(rng, vec)

def apply_affd_mp(params, vec, target_dim, input_dim):
  """Applies AFFD for model-parallel code."""

  # We need to redefine the input_dim because the code
  # is now executed on each model partition.
  affd_apply_fn = functools.partial(
    apply(affd), target_dim=target_dim,
    input_dim=input_dim // mesh.shape['model'])
  
  def apply_fn(params, vec):
    out = affd_apply_fn(params, vec)
    # The vector output is fully replicated and we need to sum
    # on the 'model' partitions
    out = lax.psum(out, axis_name='model')
    return out

  return shard_map.shard_map(
    apply_fn,
    mesh=mesh,
    in_specs=part_fn((params, vec)),
    out_specs=P(None,),
    )(params, vec)
\end{python}

We now illustrate how to use the previous code.
\begin{python}
# We create the vector x on a single device and partition
# it on the model axis.
rng = jax.random.PRNGKey(0)
target_dim = 128
input_dim = 1024
x = jax.random.normal(jax.random.fold_in(rng, 1), shape=(input_dim,))
x_mp = jax.device_put(x, NS(mesh, P('model',)))

affd_x_mp, affd_params_mp = mesh(init_affd_mp)(
    jax.random.fold_in(rng, 3), x_mp, target_dim, input_dim)

# Consistency check
affd_x_mp_2 = mesh(apply_affd_mp)(
    affd_params_mp, x_mp, target_dim, input_dim)
assert affd_x_mp_2.shape == affd_x_mp.shape
jnp.linalg.norm(affd_x_mp_2 - affd_x_mp)
\end{python}

\subsection{Algorithm for searching the intrinsic dimension}
Our algorithm for searching the intrinsic dimension is in Listing~\ref{snippet:int_dim_search}.
Without loss of generality we assume that the target metric needs to be maximized, e.g. for the
loss one might use the negative loss.
\begin{python}[caption=An algorithm that searches the intrinsic dimension, label=snippet:int_dim_search]
def finetune(model_params, D_max: int, d: int, c: int):
  """Finetune function signature.
  
  Finetunes for c steps in D_max dimensional subspace but
  zeros out the last D_max - d components of the gradient.
  Returns the updated model_params.
  """
  pass

def evaluate(model_params):
  """Eval function signature."""
  pass

def search_intrinsic_dimension(
    model_params, D_min: int, D_max: int,tau_target: float,
    c: int, delta: float):
  """
  model_params: initial model parameters.
  D_min: start value for the intrinsic dimension.
  D_max: maximum allowed value of the intrinsic dimension.
  tau_target: desired target metric.
  c: number of finetuning steps in which we expect improvement.
  delta: minimum expected improvement.
  """

  d = D_min
  tau_old = evaluate(model_params)
  while True:
    model_params = finetune(model_params, D_max, d, c)
    tau_new = evaluate(model_params)
    if tau_new >= tau_target:
      return d
    else if tau_new - tau_old < delta:
      d = 2 * d
      if d > D_max: raise ValueError("D_max exceeded")
    tau_old = tau_new
\end{python}

\subsection{Hyper-parameters for Sections \ref{sec:layer_selection_limitations}
and \ref{sec:exp_design_choices}}
Models were trained with the released code from~\cite{influence-theory23}; then checkpoints were converted to Jax for
benchmarking the sketching algorithms.

\subsection{Hyper-parameters for Sec.~\ref{sec:exp_intrinsic_dimension}}
Roberta was fine-tuned with a batch size of 32 for 10k steps with Adam and a constant learning rate of $2\times 10^{-5}$.
For the search algorithm~\ref{snippet:int_dim_search} the learning rate was increased to $10^{-4}$, $\delta=0.1$ and $c=2k$ steps.

BART was fine-tuned with a batch size of 32 for 20k steps with Adam and a constant learning rate of $2\times 10^{-5}$.
For the search algorithm~\ref{snippet:int_dim_search} the learning rate was increased to $10^{-4}$;
$\delta_{Rouge1}=0.5$, $\delta_{Rouge2}=0.5$ and $c=2k$ steps; the total number of steps was increased to 40k.

\subsection{Hyper-parameters for Sec.~\ref{subsec:eigen_evol}}
We consider the SGD optimizer as in previous work~\cite{behrooz-eigens}; the batch size was 32, the learning rate set to
$10^{-5}$.

\section{Appendix: Theory}
\label{appx:theory}
\subsection{Definition of Higher order sketches}
For higher-order derivatives of $L$, one can consider sketches of operators. For example the Hessian vector product is the operator
${\rm HVP}: \real N.\to\real N.$ given by ${\rm HVP}(u)=\nabla^2 L(\theta)(u)$, i.e.~${\rm HVP}(u)_i = \sum_j\partial^2_{i,j}L(\theta)u_j$.
A sketch of the Hessian vector product can then obtained as follows: $\sketch(O)(v)=\Phi({\rm HVP}(\Phi^T v))$ where $v\in\real D.$ so that we obtain an operator
mapping $\real D.\to\real D.$. Sketches of matrices were extensively studied~\cite{numalg-Sarlos_undated-pz} to speed-up evaluation of matrix
products. Extending the HVP-example, for an operator $O$ mapping $\real kN.$
to $\real sN.$ the transpose $\Phi^T$ is applied to the $k$ input indices and $\Phi$ is applied to the output $s$ indices to obtain
a mapping $\sketch(O):\real kD.\to\real sD.$ via:
\begin{equation}\label{eq:higher_order_sketch}
(\sketch(O)v)_{l_1\cdots l_s} = \sum_{t_1 \cdots t_s = 1}^N\sum_{i_1 \cdots i_k = 1}^N \sum_{j_1 \cdots j_k = 1}^D 
\prod_{\beta = 1}^s
\Phi_{l_\beta, t_\beta}\cdot O_{i_1 \cdots i_k; t_1 \cdots t_s} \cdot
\prod_{\alpha = 1}^k 
\Phi_{j_\alpha, i_\alpha} \cdot
v_{j_1 \cdots j_k}.
\end{equation}

\subsection{Guarantees on distorting distances.}
By the method of Johnson-Lindenstrauss~\cite{Johnson1984-ib} one can leverage the equation about
concentration of the sketched norm~\eqref{eq:jl_prob} to prove that, given $M$ points in $\real N.$,
the distances between their sketches in $\real D.$ are distorted by at most a multiplicative factor $1 \pm\varepsilon$.
The point is that concentration arguments establish, for the $\delta$ appearing in~\eqref{eq:jl_prob}, a bound of the form
$\delta = O(\exp(-\varepsilon^2 \beta^2))$:
one can thus apply~\eqref{eq:jl_prob} to the $\frac{M(M-1)}{2}$ differences between points by
requiring that $\frac{\beta}{\sqrt{\log M}}$ is sufficiently large; this is a considerable gain replacing
a bound in terms of $M^2$ with one involving $\log M$.

\subsection{Definition of the Walsh-Hadamard transform.}
\label{sec:WH-def}
The Fastfood paper~\cite{qle_fastfood13} defines $H_N$ without scaling, so it is not an isometry; however we follow the
definition with scaling as in Wikipedia so that $H_N$ is an isometry. Specifically, $N$ needs to be a power of $2$;
then for $i\le N$ let $\Delta(i)$ denote the vector of 0s and 1s and of length $\log_2 N$, representing $i$ in its binary form;
then $(H_N)_{i, j} = \frac{1}{\sqrt{N}}(-1)^{\langle\Delta(i), \Delta(j)\rangle}$, where $\langle , \rangle$ denotes the inner product.

\subsection{Failure of concentration for \fastfood.}
\begin{theorem}
There are some inputs $x$ such that $\fastfood(x)$ does not satisfy~\eqref{eq:jl_prob}.
\end{theorem}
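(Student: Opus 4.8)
The plan is to produce, for every target dimension $D$, an explicit unit vector $x=x_D$ on which the \fastfood\ sketch behaves like a single half‑normal random variable, so that the left‑hand side of~\eqref{eq:jl_prob} stays bounded below by a universal positive constant, uniformly in $D$. By~\eqref{eq:implicit_dense_sketch} the \fastfood\ sketch $\Phi:\real N.\to\real D.$ is, up to the global rescaling that makes it norm‑preserving in expectation, the transpose of the feature map~\eqref{eq:fastfood}; since $B$, $H_D$ and $G_v$ are symmetric and $\Pi^{-1}=\Pi^{T}$, a single block of this transpose is the linear map $v\mapsto B\,H_D\,\Pi^{-1}\,G_v\,H_D\,v$ on $\real D.$. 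The $N/D$ blocks carry independent randomness, and a vector supported on a single block is transformed using only that block's operators; hence it suffices to analyze one block.

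First I would take $x$ supported on the first block, with $x^{(1)}:=H_D e_1$ the first column of the scaled Walsh--Hadamard matrix of Section~\ref{sec:WH-def}; this is a unit vector (since $H_D$ is an isometry) and $H_D x^{(1)}=H_D^{2}e_1=e_1$. Running this through one block: $G_v e_1 = g\,e_1$ with $g$ a standard normal variable; then $\Pi^{-1}$ maps $g\,e_1$ to $g\,e_k$ for some coordinate $k$; then $H_D$ maps $g\,e_k$ to $g$ times the $k$‑th column of $H_D$, a vector of Euclidean norm $1$; and $B$ only flips signs. Thus $\|\Phi(x_D)\|_2$ equals $|g|$ times a deterministic constant, and since $\mathbb{E}g^{2}=1$ the normalization forcing $\mathbb{E}\|\Phi(x_D)\|_2^{2}=\|x_D\|_2^{2}$ pins that constant to $\|x_D\|_2$. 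Consequently $\|\Phi(x_D)\|_2/\|x_D\|_2=|g|$ is half‑normal, independently of $D$ and $N$.

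The conclusion is then immediate: fixing $\varepsilon=\tfrac12$, for every $D$ we have
\[
{\rm Prob}\left(\bigl|\|\Phi(x_D)\|_2-\|x_D\|_2\bigr|\ge\tfrac12\|x_D\|_2\right)={\rm Prob}\left(|g|\le\tfrac12\right)+{\rm Prob}\left(|g|\ge\tfrac32\right)=:p_0,
\]
a fixed positive constant (numerically $p_0\approx 0.52$) with no dependence on $D$ or $N$. Hence no admissible threshold $D(\varepsilon,\delta)$ exists once $\delta<p_0$: for each $D$ the vector $x_D$ violates~\eqref{eq:jl_prob}, which is exactly the assertion that there are inputs on which \fastfood\ fails the sketching property. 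The main obstacle here is not analytical but a matter of convention: fixing the normalization so the single‑block transpose is exactly norm‑preserving in expectation (so that the failure cannot be blamed on mis‑scaling), and making explicit that $x_D$ is allowed to depend on $D$, since~\eqref{eq:jl_prob} demands one threshold $D(\varepsilon,\delta)$ valid for all inputs simultaneously. It is worth contrasting $x_D$ with the input $x=e_1$, on which \fastfood\ \emph{does} concentrate, because the leading $H_D$ turns $e_1$ into a flat vector and $\|G_v H_D e_1\|_2^{2}$ is then a normalized $\chi^{2}_D$; the bad direction is precisely the one that the leading $H_D$ collapses onto a single coordinate.
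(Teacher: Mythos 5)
Your proof is correct and takes essentially the same route as the paper: both pick an input supported on a single block whose pre-image under the first Hadamard is a coordinate vector, so that after the diagonal Gaussian $G_v$ the remaining orthogonal operations carry a single $|g|$ through unchanged, and both then observe that a single half-normal cannot concentrate. Your version is slightly more explicit about the transpose ($\Pi\mapsto\Pi^{-1}$), the normalization that forces $\sigma=1$, and the final probability bound $p_0\approx 0.52$, which the paper leaves implicit, but the underlying idea and the bad input are the same.
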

\begin{proof}
Intuitively, the problem with \fastfood\ is that transposition fails to apply the pre-conditioner to some
bad inputs.
Recall that the \fastfood, when used for sketching, gets transposed because it is applied in
implicit form, i.e.~generating random features that perturb the model parameters. The transposed operation of concatenation
gives rise to a sum;
specifically, given a unit vector $x\in\real N.$, we decompose it into $N/D$-blocks of size $D$, denoting the $b$-th block
by $x_b$. Then
\begin{equation}
    \label{eq:fastfood_failure_transposed}
    \fastfood(x) = \sigma\sum_{b=1}^{N/D} B_b H \Pi_b G_b H (x_b);
\end{equation}
then choose $x$ such that all blocks $x_b = 0$ for $b>1$ and $x_1$ is such that $H(x_1)=e_1$, the first coordinate vector in
$\real D.$. Then
\begin{equation}
    \label{eq:fastfood_failure_concentration}
    \|\fastfood(x)\|_2^2 = \sigma^2 g_1^2,
\end{equation}
where $g_1$ is the first entry of $G_v$; then we must have $\sigma^2=1$ and $\|\fastfood(x)\|_2$ cannot concentrate
around $1$ because $g_1$ has unit variance.
\end{proof}

\subsection{Comparison to Kronecker products in~\cite{lotfi2022pacbayes}}
\cite{lotfi2022pacbayes} proposes two projection operators. The first one is
\begin{equation}
\label{eq:lofti_oplus}
    P_{\oplus} = \sigma \cdot (I\otimes R_1 + R_2 \otimes I)
\end{equation}
where $I$ is a vector of ones in $\real \sqrt{N}.$ and $R_i$ is Gaussian of shape $D \times \sqrt{N}$ so that the memory cost of $P_{\oplus}$ is 
$O(D\sqrt{N})$. The second one is
\begin{equation}
    P_{\otimes} = \sigma \cdot Q_1\otimes Q_2,
\end{equation}
where $Q_i$ is Gaussian of shape $\sqrt{D}\times \sqrt{N}$
so that the memory cost of $P_{\otimes}$ is 
$O(\sqrt{D}\sqrt{N})$. Our \straight\ proposal
is more general as it calls for a more general Kronecker decomposition 
\begin{equation}
\label{eq:our_q_to_lofti}
   Q = \sigma \cdot Q^{(1)} \otimes Q^{(2)} \otimes \cdots \otimes Q^{(K)};
\end{equation}
where $Q^{(i)}$ is of shape $D_i\times B_i$, $\prod_i D_i = D$ (reconstruction
of the target dimension) and $\prod_i B_i = N$ (reconstruction of the model dimension). So if we choose $K=2$, $D_i=\sqrt{D}$
and $B_i=\sqrt{N}$ we recover $ P_{\otimes}$. Strictly speaking, $P_{\oplus}$ is different
from $ P_{\otimes}$, but one might reconstruct it by averaging two of our $Q$'s~\eqref{eq:our_q_to_lofti}, both
defined with with $K=2$:
indeed, we select $Q_1$ where $Q^{(1)}$ is the vector $I/\sqrt{N}$ as in~\eqref{eq:lofti_oplus} and $Q^{(2)}$ is of shape $D \times \sqrt{N}$;
we then select $Q_2$ where $Q^{(2)}$ is $I/\sqrt{N}$ and $Q^{(1)}$ is of shape $D \times \sqrt{N}$; in other words, to get $P_{\oplus}$
we restrict the sampling of one factor to $I/\sqrt{N}$.
Note that memory-wise our approach is more efficient than~\cite{lotfi2022pacbayes} as we allow $K>2$; memory cost is
$O(\sum_{i=1}^K B_i D_i)$ which, choosing $B_i\simeq A$ and $D_i\simeq A'$
allows for a memory cost $O(A A' \log N)$. An implementation difference with~\cite{lotfi2022pacbayes} is
that we sample from the special orthogonal group: we sample a Gaussian of shape $D_i\times B_i$
and obtain $Q^{(i)}$ using the QR-decomposition.

\subsection{Concentration result for \straight: Theorem~\ref{thm:qk_theoretical_guaranteed}}
\begin{theorem}
\label{thm:qk_theorem}
Consider the projection algorithm $\straight$ where $Q$ decomposes as $Q^{(1)}\otimes \cdots \otimes Q^{(K)}$
where $Q^{(i)}$ has shape $D_i\times B_i$; then
\begin{equation}
    \label{eq:qk_thm_result}
    P\Bigl(
    \sqrt{\frac{D}{N}} (1-\varepsilon)\|x\|_2 \le \|Q(x)\|_2 \le \sqrt{\frac{D}{N}} (1+\varepsilon) \|x\|_2
    \Bigr) \ge 1-2\sum_i \exp(-4C D_i((1+\varepsilon)^{1/K}-1)^2).
\end{equation}
\end{theorem}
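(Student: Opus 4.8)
The plan is to prove the single‑input concentration~\eqref{eq:jl_prob} for $\straight$ by exploiting the Kronecker factorization $Q = Q^{(1)}\otimes\cdots\otimes Q^{(K)}$ \emph{one factor at a time}. First I would reshape $x\in\mathbb{R}^N$ into a $K$‑th order tensor $X\in\mathbb{R}^{B_1\times\cdots\times B_K}$, set $X_0=X$, and let $X_k = X_{k-1}\times_k Q^{(k)}$ be the mode‑$k$ product with $Q^{(k)}$. Since the factors act on distinct modes they commute, so $X_K$ is a reshaping of $Q(x)$ and $\|Q(x)\|_2=\|X_K\|_F$; moreover $X_{k-1}$ is a function of $x$ and of $Q^{(1)},\dots,Q^{(k-1)}$ only, hence independent of $Q^{(k)}$, so step $k$ can be analyzed conditionally on $X_{k-1}$ treated as a fixed tensor. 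Because $Q^{(k)}$ is the restriction of a Haar element of $SO(B_k)$ to its first $D_k$ rows, its row span is a uniformly random $D_k$‑dimensional subspace of $\mathbb{R}^{B_k}$, so $\mathbb{E}\bigl[(Q^{(k)})^{\top}Q^{(k)}\bigr]=\tfrac{D_k}{B_k}I$ and each step contracts the \emph{expected} squared norm by $D_k/B_k$; this is exactly why the normalization is $\sqrt{N/D}=\prod_k\sqrt{B_k/D_k}$.

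The core ingredient is an input‑free one‑step bound: for \emph{any} fixed tensor $Y$, letting $M$ be its mode‑$k$ unfolding (a $B_k\times m$ matrix, so $\|M\|_F=\|Y\|_F$) and writing $Q^{(k)}=PO$ with $P=[\,I_{D_k}\mid 0\,]$ and $O$ Haar on $SO(B_k)$, one has $\|Y\times_k Q^{(k)}\|_F^2=\mathrm{tr}\!\bigl(P\,O\,MM^{\top}O^{\top}P^{\top}\bigr)$, whose expectation is $\tfrac{D_k}{B_k}\|Y\|_F^2$. Consider $g(O)=\|POM\|_F$. The key observation is that $g$ is $\|M\|_{\mathrm{op}}$‑Lipschitz in $O$ for the Frobenius metric and $\|M\|_{\mathrm{op}}\le\|M\|_F=\|Y\|_F$, so $g/\|Y\|_F$ is $1$‑Lipschitz \emph{regardless of $Y$, $m$, or $B_k$}. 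Feeding this into the concentration of measure on $SO(B_k)$ (positive Ricci curvature of order $B_k$, hence a log–Sobolev inequality with constant of that order) gives $\mathbb{P}\bigl(|g-\mathbb{E}g|\ge t\|Y\|_F\bigr)\le 2\exp(-cB_k t^2)$. Since $\mathbb{E}\bigl[(g/\|Y\|_F)^2\bigr]=\tfrac{D_k}{B_k}$ and $\mathrm{Var}(g/\|Y\|_F)=O(1/B_k)$, one has $\mathbb{E}[g]=(1+o(1))\sqrt{D_k/B_k}\,\|Y\|_F$ once $D_k$ exceeds an absolute constant; requiring a relative error $\varepsilon_k$ on $g^2$ then forces $t$ of order $\varepsilon_k\sqrt{D_k/B_k}$, and the factor $B_k$ in the exponent cancels the $D_k/B_k$ coming from $t^2$, leaving $2\exp(-4CD_k\varepsilon_k^2)$ for a universal $C$ — depending only on $D_k$ (for smaller $D_k$ this quantity is $\ge 1$ and the bound is vacuous). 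Equivalently: conditionally on $X_{k-1}$, the event $E_k=\{(1-\varepsilon_k)\sqrt{D_k/B_k}\,\|X_{k-1}\|_F\le\|X_k\|_F\le(1+\varepsilon_k)\sqrt{D_k/B_k}\,\|X_{k-1}\|_F\}$ fails with probability at most $2\exp(-4CD_k\varepsilon_k^2)$.

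To conclude, I would choose $\varepsilon_k=(1+\varepsilon)^{1/K}-1$ for every $k$. On $\bigcap_{k=1}^{K}E_k$ the multiplicative errors telescope: the upper side is $\prod_k(1+\varepsilon_k)\sqrt{D/N}\,\|x\|_2=(1+\varepsilon)\sqrt{D/N}\,\|x\|_2$, and since $(1+\varepsilon)^{1/K}-1\le\varepsilon/K$ the lower side is at least $(1-\varepsilon/K)^K\sqrt{D/N}\,\|x\|_2\ge(1-\varepsilon)\sqrt{D/N}\,\|x\|_2$ by Bernoulli's inequality; multiplying through by $\sqrt{N/D}$ yields $(1-\varepsilon)\|x\|_2\le\|\Phi(x)\|_2\le(1+\varepsilon)\|x\|_2$. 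A union bound over the $K$ steps — legitimate because each conditional failure probability is uniform in $X_{k-1}$, so taking expectations over $X_{k-1}$ preserves it — gives $\delta\le 2\sum_k\exp\bigl(-4CD_k((1+\varepsilon)^{1/K}-1)^2\bigr)$, which is the claim.

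The step I expect to be the main obstacle is the one‑step bound, and within it two points. First, making precise that the Lipschitz constant of $O\mapsto\|POM\|_F/\|M\|_F$ is bounded by $\|M\|_{\mathrm{op}}/\|M\|_F\le 1$ \emph{uniformly} in $Y$: this is what prevents a spurious $\log N$ factor that a naive union bound over the $m$ fibers of the unfolding would introduce, and is precisely why the final bound sees only the $D_i$. Second, converting the additive concentration of $g$ on $SO(B_k)$ into a multiplicative statement about $g^2$ around $D_k/B_k$: this requires controlling $\mathbb{E}[g]$ versus $\sqrt{\mathbb{E}[g^2]}$ (harmless once $D_k$ exceeds an absolute constant, with the stated bound trivially true otherwise) and tracking how the curvature scale $B_k$ and the amplitude scale $\sqrt{D_k/B_k}$ combine to leave a clean $D_k\varepsilon_k^2$ in the exponent. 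An alternative to the Lipschitz route is to reduce the one‑step bound to the rank‑one case — where $g^2/\|Y\|_F^2$ is exactly $\mathrm{Beta}(D_k/2,(B_k-D_k)/2)$ and explicit tail bounds apply — using that a rank‑one input is the least concentrated; the Lipschitz argument is, however, more self‑contained. Everything else — the tensor reshaping, the commuting mode products, the independence/telescoping, and the choice $\varepsilon_k=(1+\varepsilon)^{1/K}-1$ with Bernoulli — is routine bookkeeping.
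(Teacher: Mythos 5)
Your proposal follows essentially the same approach as the paper's proof: reshape $x$ into a $K$-mode tensor, apply the Kronecker factors one mode at a time, obtain a per-step concentration bound via the Lipschitz concentration inequality on $SO(B_i)$, replace $\mathbb{E}[g]$ by $\sqrt{\mathbb{E}[g^2]}=\sqrt{D_i/B_i}\,\|Y\|_F$ up to $O(1/\sqrt{B_i})$, and combine the steps by conditional independence with $\varepsilon_i=(1+\varepsilon)^{1/K}-1$. The only minor variations — bounding the Lipschitz constant by $\|M\|_{\mathrm{op}}\le\|M\|_F$ instead of differentiating $f$ and applying Cauchy--Schwarz as the paper does, and spelling out the Bernoulli step $\prod_i(1-\varepsilon_i)\ge 1-\varepsilon$ that the paper leaves implicit — are cosmetic.
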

In particular, as long as each $D_i$ is sufficiently large one still obtains a concentration result; the price
to pay for the Kronecker product decomposition is that the concentration probability is dampened by the number of factors as
$((1+\varepsilon)^{1/K}-1)^2\simeq(\frac{\varepsilon}{K})^2$.
\begin{proof}
\textbf{Step 1: Reduction to the case of applying a single factor in the Kronecker product representation of $Q$.}
We assume that $Q$ decomposes as $Q^{(1)}\otimes \cdots \otimes Q^{(K)}$
where $Q^{(i)}$ has shape $D_i\times B_i$; if we reshape $x$ into a tensor of shape
$(B_1,\cdots, B_K)$ indexed by $(a_1,\cdots,a_K)$, the output $Q(x)$ can be represented
as a tensor of shape $(D_1,\cdots, D_K)$ indexed by $(b_1,\cdots, b_K)$:
\begin{equation}
    \label{eq:qk_proof_tensor_dec}
    Q(x)_{b_1,\cdots,b_K} = \sum_{a_1,\cdots,a_K} Q^{(1)}_{b_1, a_1}Q^{(2)}_{b_2, a_2}
    \cdots Q^{(K)}_{b_K, a_K} x_{a_1,\cdots a_K}.
\end{equation}
We now look into applying \eqref{eq:qk_proof_tensor_dec} one step at a time. We reshape $x$ to shape
$(B_1,B_2\cdots B_K)$ obtaining a matrix $X^{(1)}_{a_1, c}$; we then contract it with $Q^{(1)}_{b_1, a_1}$
over $a_1$ to obtain a matrix $Y^{(1)}_{b_1,c}$ of shape $(D_1,B_2\cdots B_K)$. To apply $Q^{(2)}_{b_2, a_2}$
we need to first reshape $Y^{(1)}$ to shape $(D_1,B_2, \cdots, B_K)$, then transpose the first and second indices,
and reshape it to a matrix $X^{(2)}_{a_2, c}$ of shape $(B_2,D_1 B_3\cdots B_K)$; we can then contract it
with $Q^{(2)}_{b_2, a_2}$
over $a_2$ to obtain a matrix $Y^{(2)}_{b_2,c}$ of shape $(D_2,D_1 B_3\cdots B_K)$. It should be clear how
this procedure can be continued for each $i\in\{3,\cdots K\}$.
Assume that for each $i$ we can prove that the Frobenius norm (which is the $l^2$-norm if we reshape
it to be a vector) of $Y^{(i)}_{b_i,c}$ concentrates
around that of $X^{(i)}_{b_i,c}$ up to a multiplicative factor $\sqrt{\frac{D_i}{B_i}}$:
\begin{equation}
    \label{eq:qk_proof_desired_jl_on_block_i}
    P\Bigl(
    \sqrt{\frac{D_i}{B_i}} (1-\varepsilon_i) \|X^{(i)}\|_2 \le \|Y^{(i)}\|_2 \le \sqrt{\frac{D_i}{B_i}} (1+\varepsilon_i) \|X^{(i)}\|_2
    \Bigr) \ge 1-\delta_i;
\end{equation}
by conditional independence of the matrices $Q^{(i)}$ on one another we get that
\begin{equation}
    \label{eq:qk_independence_bound}
    P\Bigl(
    \sqrt{\frac{D}{N}} \prod_{i=1}^K(1-\varepsilon_i) \|x\|_2 \le \|Q(x)\|_2 \le \sqrt{\frac{D}{N}} \prod_{i=1}^K(1+\varepsilon_i) \|x\|_2
    \Bigr) \ge \prod_{i=1}^K(1-\delta_i),
\end{equation}
where we used $\prod_{i=1}^K D_i=D$ and  $\prod_{i=1}^K B_i=N$.

\textbf{Step 2: Using concentration of measure on the orthogonal group.} The entries of $Q^{(i)}$ are not independent
because of the orthogonality requirement and the fact that the rows need to have
$l_2$-norm equal to $1$. We will employ measure concentration without independence; as a reference for
notation and theorems 
we use \cite{vershynin-concentration}. From \cite[2.5.2]{vershynin-concentration}  we recall the definition of
the sub-Gaussian norm of a real-valued random variable $X$ as:
\begin{equation}
    \label{eq:qk_subgaussian_norm}
    \|X\|_{\psi_2} = \inf \{
        t>0: E \exp(X^2/t^2)\le 2
    \};
\end{equation}
obtaining a bound on $\|X\|_{\psi_2}$ is equivalent to a concentration inequality:
\begin{equation}
    \label{eq:qk_subgaussian_concentration}
    P(|X|\ge t) \le 2 \exp(-c t^2/\|X\|^2_{\psi_2}),
\end{equation}
for a universal constant $c>0$. Note that $Q^{(i)}$ can be sampled on the orthogonal group $SO(B_i)$
by restricting to the first $D_i$-rows in the case in which $D_i < B_i$ (by sampling from $O(B_i)$ and
changing in case the sign of one of the last $B_i-D_i$ rows to ensure the determinant is $1$), while the result
we are proving is trivial if $D_i = B_i$ because then $Q^{(i)}$ is full-rank. 
We now invoke the concentration of measure for $SO(B_i)$~\cite[5.2.7]{vershynin-concentration}; if $f:SO(B_i)\to\real .$
is Lipschitz:

\begin{equation}
    \label{eq:so_n_concentration}
    \|f(Q^{(i)}) - E f(Q^{(i)})\|_{\psi_2} \le C\frac{\|f\|_{Lip}}{\sqrt{B_i}}
    ,
\end{equation}
where $C$ is a universal constant and the Lipschitz constant $\|f\|_{Lip}$ is computed using the Frobenius
norm on the tangent space.
We now define:
\begin{equation}
    \label{eq:qk_lipschitz_choice}
    f(Q^{(i)}) = \|Y^{(i)}\|_2  = \Bigl(
    \sum_{b_i, c}\bigl(
        \sum_{a_i} Q^{(i)}_{b_i, a_i} X^{(i)}_{a_i, c}
    \bigr)^{2}
    \Bigr)^{\frac{1}{2}};
\end{equation}
which has derivative:
\begin{equation}
    \label{eq:qk_lipschitz_derivative}
    \frac{\partial f(Q^{(i)})}{\partial Q_{b_i, a_i}} =
    \frac{\sum_c  Y^{(i)}_{b_i, c} X^{(i)}_{a_i, c}}{\|Y^{(i)}\|_2};
\end{equation}
the Cauchy-Schwartz inequality implies that:
\begin{equation}
    \label{eq:qk_lipschitz_bound}
    \left|\frac{\partial f(Q^{(i)})}{\partial Q_{b_i, a_i}}\right| \le
    \frac{(\sum_c  (Y^{(i)}_{b_i, c})^2)^{1/2} (\sum_c(X^{(i)}_{a_i, c})^2)^{1/2}}{\|Y^{(i)}\|_2},
\end{equation}
from which it follows that $\|f\|_{Lip}\le 1$ if one assumes that $\|X^{(i)}\|_2\le 1$. Note that
to derive~\eqref{eq:qk_independence_bound} we may rescale $x$ by a constant; if we rescale it so that
$\|x\|_2 = 1$ then all the norms of the intermediate results $\|X^{(i)}\|_2$, $\|Y^{(i)}\|_2$ are at most $1$
because the matrices $Q^{(i)}$ are orthogonal. We have thus established
\begin{equation}
    \label{eq:so_n_established}
    \|f(Q^{(i)}) - E f(Q^{(i)})\|_{\psi_2} \le \frac{C}{\sqrt{B_i}}.
\end{equation}
\textbf{Step 3: Replacing $E f(Q^{(i)})$ with something simpler to estimate.} A drawback of~\eqref{eq:so_n_established}
is that the term $E f(Q^{(i)})$ is not easy to estimate. However, using a symmetry argument, it is easy to estimate
$E (f(Q^{(i)}))^2$; indeed the $D_i$ variables $\sum_c (Y^{(i)}_{b_i, c})^2$ are identically distributed and if $D_i=B_i$
one would get an isometry; so 
\begin{equation}
    \label{eq:squared_norm}
    E (f(Q^{(i)}))^2 = \frac{D_i}{B_i} \|X^{(i)}\|_2^2.
\end{equation}
So we would like to replace $E f(Q^{(i)})$ with $\sqrt{E (f(Q^{(i)}))^2}$; the intuition why this would work is that
concentration around the mean is equivalent to concentration around the median; so $E f(Q^{(i)})$ concentrates around the
median $M_i$; as $f(Q^{(i)})$ is non-negative, the median of $(f(Q^{(i)}))^2$ is $M_i^2$; and this variable concentrates
both around the mean and the median, and we have a closed form for the mean~\eqref{eq:squared_norm}.
To make this more precise, by the fact that concentration around the mean is equivalent to concentration around the mean
(see \cite[5.1.13]{vershynin-concentration}),
we have
\begin{equation}
    \label{eq:so_n_median}
    \|f(Q^{(i)}) - M_i\|_{\psi_2} \le \frac{C}{\sqrt{B_i}},
\end{equation}
where the constant $C$ might have changed but is universal. We then just need to show that
$|\sqrt{E (f(Q^{(i)})^2)}-M_i|$ is $O(1/\sqrt{B_i})$. By the triangle inequality:
\begin{equation}
    |\sqrt{E (f(Q^{(i)})^2)}-M_i| \le \sqrt{E|f(Q^{(i)}) - M_i|^2},
\end{equation}
and we can compute the right hand side with the layer cake decomposition:
\begin{equation}
    \sqrt{E|f(Q^{(i)}) - M_i|^2} = \left(
    \int_0^\infty P(|f(Q^{(i)}) - M_i|^2\ge u)\,du
    \right)^{\frac{1}{2}},
\end{equation}
and we apply the concentration inequality~\eqref{eq:qk_subgaussian_concentration} to the right hand side to get
\begin{equation}
    \sqrt{E|f(Q^{(i)}) - M_i|^2} \le \left(
    \int_0^\infty 2\exp(-\tilde c B_i u)\,du
    \right)^{\frac{1}{2}},
\end{equation}
which implies that the right hand size is $O(1/\sqrt{B_i})$.
We have thus established
\begin{equation}
\label{eq:qk_proof_concentration_ugly}
    P\left(\left|
    \| Y^{(i)} \|_2 - \sqrt{\frac{D_i}{B_i}}\| X^{(i)} \|_2
    \right| \ge t\right) \le 2\exp(-C B_i t^2).
\end{equation}
We now deduce~\eqref{eq:qk_proof_desired_jl_on_block_i} conditional that it holds for $j=1,\cdots,i-1$
so that we may assume that $\| X^{(i-1)} \|_2 \ge \frac{1}{2}$; then we may take
$t=\varepsilon_i \sqrt{\frac{D_i}{B_i}}\| X^{(i)} \|_2$ and get~\eqref{eq:qk_proof_desired_jl_on_block_i}
with 
\begin{equation}
\label{eq:qk_proof_form_of_delta_i}
   \delta_i = 2\exp(-4C D_i\varepsilon_i^2).
\end{equation}
\textbf{Step 4: Choosing the $\varepsilon_i$.} We just aim for $\varepsilon_i$ to be equal and that $\prod_i(1+\varepsilon_i)=1+\varepsilon$;
this is achieved by letting $\varepsilon_i=(1+\varepsilon)^{1/K}-1$. In this case we may lower bound 
$\prod_i(1-\delta_i)$ by $1-2\sum_i \exp(-4C D_i((1+\varepsilon)^{1/K}-1)^2)$.
\end{proof}

\subsection{Concentration result for \bami}
\begin{theorem}
Consider sketching with \bami; then
\begin{equation}
    \label{eq:bami_thm_result}
    P\Bigl(
    (1-\varepsilon)\|x\|_2 \le \|\bami(x)\|_2 \le (1+\varepsilon) \|x\|_2
    \Bigr) \ge 1-\delta,
\end{equation}
where for each $\delta_1>0$ we have 
\begin{equation}
\label{eq:bami_thm_result_delta}
   \delta \le\delta_1 + \exp\left(-C\varepsilon^2\frac{N}{2\log\frac{2N}{\delta_1}}\right) +
   \exp\left(-C\varepsilon^2\frac{D}{4\log^2\frac{2N}{\delta_1}}\right).
\end{equation}
\end{theorem}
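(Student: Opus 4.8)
The plan is to adapt the Ailon--Chazelle analysis of \ailonchazelle, but because the Gaussian in \bami\ sits in the \emph{middle} (the diagonal matrix $G_v$) rather than at the end, the independence argument of~\cite{Ailon_undated-cc} is unavailable; instead I would condition on the sign matrix $B$, freeze the two permutations (they are irrelevant below), and exhibit $\|\bami(x)\|_2^2$ as a quadratic form in the single Gaussian vector $g:=\mathrm{diag}(G_v)\sim\mathcal N(0,I_N)$, to which the Hanson--Wright inequality applies. Normalise $\|x\|_2=1$. \emph{Step 1 (the $H^{\pi_1}_N B$ preconditioner).} Put $y:=H^{\pi_1}_N B(x)$; orthogonality gives $\|y\|_2=1$. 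Each coordinate $y_j=\sum_i (H^{\pi_1}_N)_{j,i}\,b_i x_i$ is a Rademacher sum of terms of modulus $|x_i|/\sqrt N$ (every entry of a Walsh--Hadamard matrix, hence of any row/column permutation of it, has modulus $1/\sqrt N$), so Hoeffding gives ${\rm Prob}(|y_j|\ge s)\le 2\exp(-s^2 N/2)$; a union bound over $j=1,\dots,N$ shows that with $\beta:=\sqrt{2\log(2N/\delta_1)/N}$ the event $E:=\{\|y\|_\infty<\beta\}$ has probability at least $1-\delta_1$. Everything that follows is computed on $E$, where $y$ is a fixed unit vector with $\|y\|_\infty<\beta$.

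\emph{Step 2 (the quadratic form and its spectral norms).} With $z:=G_v y$ (so $z_i=g_i y_i$) and $D_y:=\mathrm{diag}(y)$ one has $\bami(x)=\sqrt{N/D}\,R_D(H^{\pi_2}_N z)$, hence $\|\bami(x)\|_2^2=\tfrac ND\, g^\top A g$ with $A=D_y P D_y$, where $P:=(H^{\pi_2}_{N,[D]})^\top H^{\pi_2}_{N,[D]}$ is the orthogonal projection onto the span of the first $D$ rows of $H^{\pi_2}_N$. Since all entries of $H^{\pi_2}_N$ have modulus $1/\sqrt N$, $P_{ii}=D/N$ for every $i$, so $\mathrm{tr}(A)=\sum_i y_i^2 P_{ii}=D/N$ (in particular $\mathbb E\|\bami(x)\|_2^2=1$), $\|A\|_{\mathrm{op}}\le\|D_y\|_{\mathrm{op}}^2\le\beta^2$, and $\mathrm{rank}(A)\le\mathrm{rank}(P)=D$, whence $\|A\|_F^2\le\|A\|_{\mathrm{op}}^2\,\mathrm{rank}(A)\le\beta^4 D$.

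\emph{Step 3 (separate the diagonal, apply Hanson--Wright twice, assemble).} Let $D_A$ be the diagonal of $A$; then $g^\top D_A g=\tfrac DN\sum_i y_i^2 g_i^2=\tfrac DN\|z\|_2^2$, so $\|\bami(x)\|_2^2=\|z\|_2^2+\tfrac ND\, g^\top(A-D_A)g$ and $\bigl|\,\|\bami(x)\|_2^2-1\,\bigr|\le\bigl|\,\|z\|_2^2-1\,\bigr|+\tfrac ND\bigl|g^\top(A-D_A)g\bigr|$. For the first summand, $\|z\|_2^2=\sum_i y_i^2 g_i^2$ is a weighted $\chi^2$ with mean $\|y\|_2^2=1$, and since $\sum_i y_i^4\le\|y\|_\infty^2\le\beta^2$ and $\max_i y_i^2\le\beta^2$, Bernstein (equivalently Hanson--Wright for the diagonal matrix $D_y^2$) gives, for $0<s\le1$, ${\rm Prob}(|\,\|z\|_2^2-1\,|\ge s)\le 2\exp(-cs^2/\beta^2)=2\exp(-cs^2 N/(2\log(2N/\delta_1)))$; taking $s=\varepsilon/2$ yields the middle exponential term of $\delta$. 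For the second summand, $\mathbb E[g^\top(A-D_A)g]=0$, $\|A-D_A\|_F\le\|A\|_F\le\beta^2\sqrt D$ and $\|A-D_A\|_{\mathrm{op}}\le 2\|A\|_{\mathrm{op}}\le 2\beta^2$, so Hanson--Wright gives ${\rm Prob}(|g^\top(A-D_A)g|\ge t)\le 2\exp(-c\min(t^2/(\beta^4 D),\,t/(2\beta^2)))$; choosing $t=\tfrac{D\varepsilon}{2N}$ (so that this is exactly the event $\{\tfrac ND|g^\top(A-D_A)g|\ge\varepsilon/2\}$) and using $\varepsilon\le1$ so the quadratic branch dominates, this is at most $2\exp(-C\varepsilon^2 D/(4\log^2(2N/\delta_1)))$, the last term of $\delta$. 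A union bound over the two events plus $\delta_1$ for $E^c$ gives ${\rm Prob}(|\,\|\bami(x)\|_2^2-1\,|\ge\varepsilon)\le\delta$, and since $\varepsilon\le1$, $|\,\|\bami(x)\|_2^2-1\,|\le\varepsilon$ forces $1-\varepsilon\le\sqrt{1-\varepsilon}\le\|\bami(x)\|_2\le\sqrt{1+\varepsilon}\le 1+\varepsilon$, i.e.~\eqref{eq:bami_thm_result}, after absorbing the factors $2$ and absolute constants into $C$.

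\emph{Main obstacle.} The crux is recognising that the diagonal of $A$ must be peeled off \emph{before} invoking Hanson--Wright: it equals $\tfrac DN\|z\|_2^2$, and the fluctuations of $\|z\|_2^2=\|G_v y\|_2^2$ sit on the coarser scale $N/\log(2N/\delta_1)$ (a weighted $\chi^2$ driven by the middle Gaussian), whereas the off-diagonal part concentrates on the finer scale $D/\log^2(2N/\delta_1)$; this is precisely why $\delta$ carries two exponential terms, and a single blunt application of Hanson--Wright to $A$ would blur the distinction. The only other point needing care is the chain of spectral estimates for $A=D_y P D_y$, where the preconditioning bound $\|y\|_\infty<\beta$ enters through $\|A\|_{\mathrm{op}}\le\beta^2$ and then, via $\|A\|_F^2\le\|A\|_{\mathrm{op}}^2\,\mathrm{rank}(A)$, through $\|A\|_F$; Hoeffding for the preconditioner, the two Hanson--Wright invocations, and the passage from $\|\cdot\|_2^2$ to $\|\cdot\|_2$ are all routine.
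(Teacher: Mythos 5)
Your proof is correct and follows the same overall blueprint as the paper's (Hoeffding for the preconditioner $H^{\pi_1}_N B$, a diagonal/off-diagonal split of the quadratic form in $g=\mathrm{diag}(G_v)$, Hanson--Wright for the off-diagonal), and indeed your $\|z\|_2^2$ and $\tfrac{N}{D}g^\top(A-D_A)g$ are exactly the paper's $T_1$ and $T_2$. Where you diverge is in the two spots that actually require work, and your choices are arguably cleaner. First, to bound $\|A-D_A\|_F$ the paper carries out a combinatorial Fourier-on-$\mathbb{Z}_2^{\log_2 N}$ analysis of the Walsh--Hadamard entries, showing that at most $N/D$ of the $B_{jk}$ in each row are nonzero and each has magnitude $\le t_\infty^2/N$; you instead observe that $A=D_y P D_y$ with $P$ an orthogonal projection of rank $D$, so $\|A\|_F^2\le\|A\|_{\mathrm{op}}^2\,\mathrm{rank}(A)\le\beta^4 D$, recovering the identical Frobenius bound with no reference to the Hadamard structure whatever. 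That makes the argument transparently extend to any orthogonal preconditioner in the second slot (e.g.\ the FFT or the $Q$ preconditioner the paper also advocates), which the paper's structural proof does not automatically do. Second, for the diagonal term the paper applies Gaussian Lipschitz concentration to $\sqrt{T_1}$, silently conflating $\mathbb{E}\sqrt{T_1}$ with $\sqrt{\mathbb{E}T_1}=1$ (a gap that the companion \straight\ proof handles carefully via the median, but that is skipped here); you apply Bernstein to $T_1=\|z\|_2^2$ directly as a weighted $\chi^2$, which reaches the same exponent $\varepsilon^2 N/\log(2N/\delta_1)$ without that issue. In short: same decomposition, same Hanson--Wright engine, but your two lemmas are simpler, sidestep a small gap in the paper's diagonal step, and are more robust to changing the preconditioner.
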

Note that the first two terms on the right hand side~\eqref{eq:bami_thm_result_delta}
can be made arbitrarily small (especially as $N$ is typically very large and $\delta_1$ will
affect $C$ in the third term); thus
the effective bound for $\delta$ is of the form $\delta\le\exp\left(-C\varepsilon^2\frac{D}{\log^2N}\right)$.
\begin{proof}
We will again use the notation and conventions from~\cite{vershynin-concentration}.
Let us recall the definition of \bami:
\begin{equation}
   \Phi(x) = R_D(\sigma\cdot H_2\cdot G_v \cdot H_1 \cdot B(x));
\end{equation}
without loss of generality we will assume that $\|x\|_2=1$.

\textbf{Step 1: Using the pre-conditioner $H_1$ to distribute the mass of $x$.}
Note that $H_1$ is an $N\times N$-dimensional
matrix with entries of the form $\pm\frac{1}{\sqrt{N}}$; each entry of the vector $H_1 B(x)$ is of the form
$\sum_{i}\frac{s_i b_i x_i}{\sqrt{N}}$ where the $\{b_i\}_{i=1}^N$ are independent Bernoulli and $s_i=\pm 1$; applying Hoeffding's inequality~\cite[Thm.2.2.2]{vershynin-concentration} to each entry of $H_1 B(x)$
 we obtain that:
\begin{equation}
\label{eq:affd_proof_linfty}
    P\left(\|H_1 B(x)\|_\infty \ge \frac{t_\infty}{\sqrt{N}}\right) \le 2N\exp\left(-\frac{t^2_\infty}{2}\right);
\end{equation}
intuitively, the norm of each entry of $H_1 B(x)$ cannot become much larger than a multiple of its
variance $\frac{1}{\sqrt{N}}$: this is the purpose of using a pre-conditioner to distribute the mass of $x$.

\textbf{Step 2: Decomposing $\|\Phi(x)\|_2^2$.} We now let $u=H_1 \cdot B(x)$ so that 
$\Phi(x) = \sigma R_D(H_2\cdot G_v\cdot u)$; we let $\Pi$ be the permutation associated with rearranging the
columns of $H_2$ so that
\begin{equation}
\label{eq:affd_decomposition_in_progress}
\|\Phi(x)\|_2^2 = \sigma^2 \sum_{i=1}^D \left(
    \sum_{j=1}^N H_{i, \Pi(j)} g_j u_j
\right)^2;
\end{equation}
in~\eqref{eq:affd_decomposition_in_progress} we decompose the effect of the diagonal and the
off-diagonal terms obtaining
\begin{equation}
\label{eq:affd_decomposition_done}
\|\Phi(x)\|_2^2 = \underbrace{\sigma^2 \sum_{i=1}^D \sum_{j=1}^N \frac{1}{N} g_j^2 u_j^2}_{T_1}
+ \underbrace{\sigma^2 \sum_{i=1}^D \sum_{k\ne j}H_{i, \Pi(k)} H_{i, \Pi(j)} g_j g_k u_j u_k}_{T_2}.
\end{equation}
We now use the first term $T_1$ to compute $\sigma$: 
\begin{equation}
\label{eq:affd_e_t1}
E T_1 = \sigma^2 \sum_{j=1}^N \frac{D}{N}  u_j^2,
\end{equation}
where we used the fact the components of $G_v$ have unit variance; as $H_1 B (x)$ is an isometry,
to have $E T_1=1$ we just need to set $\sigma = \sqrt{\frac{N}{D}}$.

\textbf{Step 3: Concentration for $\sqrt{T_1}$.}  If we regard $\sqrt{T_1}$ as a function $f_1(G_v)$,
we have
\begin{equation}
\label{eq:affd_f1}
f_1(G_v) = \left(
\sum_{j=1}^N g_j^2 u_j^2
\right)^{1/2};
\end{equation}
conditional on the good event $E_{good}$ that $\|H_1 B(x)\|_\infty\le \frac{t_\infty}{\sqrt{N}} $,
this function is $\frac{t_\infty}{\sqrt{N}}$-Lipschitz in the $l^2$-norm of $G_v$; applying concentration for the 
Gaussian measure on $\real N.$~\cite[Sec.~5.2.2]{vershynin-concentration} we get that conditional on $E_{good}$
\begin{equation}
\label{eq:affd_t1_concentration}
P\left(
|\sqrt{T_1} - 1| \ge \varepsilon
\right) \le \exp\left(-C\frac{N}{t_\infty^2}\varepsilon^2\right).
\end{equation}

\textbf{Step 4: Concentration for $|T_2|$.} We would like to claim that $T_2$ is small with high probability.
This is the second point in which we use the pre-conditioner $H$; the intuition is that applying the
pre-conditioner $H$ to $G_v$ further reduces the finite sample correlation of the rows of the resulting matrix.
Let us rewrite $T_2$ as follows:
\begin{equation}
    T_2 =\sum_{k\ne j} \underbrace{\sum_{i=1}^D \frac{N}{D} H_{i, \Pi(k)} H_{i, \Pi(j)}}_{B_{i j}} u_j u_k g_j g_k;
\end{equation}
so we have reduced $T_2$ to a bi-linear form $\sum_{j, k} B_{j, k} g_j g_k$; by the Hanson-Wright inequality~\cite[6.2.1]{vershynin-concentration} we have
\begin{equation}
\label{eq:affd_hanson_wright_coarse}
    P(|T_2|\ge \varepsilon)\le \exp\left(-C \min\{\frac{\varepsilon^2}{\|B\|_F^2}, \frac{\varepsilon}{\|B\|_S}\}\right),
\end{equation}
where $\|B\|_F$ is the Frobenious norm of $B$ and $\|B\|_S$ is the spectral norm. Let us look at $B_{j, k}$ conditional on $E_{good}$:
\begin{equation}
\label{eq:affd_B_j_k}
   B_{j,k} = \sum_{i=1}^D \frac{N}{D} H_{i, \Pi(k)} H_{i, \Pi(j)}u_j u_k.
\end{equation}
Recall now that $D$ and $N$ are both powers of $2$ and the definition of $H$ in Sec.~\ref{sec:WH-def}:
if $\Delta(j)$ denotes the binary vector, of length $\log_2 N$, representing an integer $j\le N$ ,
we have $H_{i, \Pi(k)} = \frac{1}{\sqrt{N}}(-1)^{\langle\Delta(i), \Delta(\Pi(k))\rangle}$, where $\langle a, b\rangle$ denotes
the inner product of two vectors of length $\log_2 N$. We thus obtain the bound:
\begin{equation}
\label{eq:affd_B_j_k_better_bound}
   |B_{j,k}| \le\frac{t_\infty^2}{D} \left| \frac{1}{N}\sum_{i=1}^D (-1)^{\langle\Delta(i), \Delta(\Pi(k)) + \Delta(\Pi(j))\rangle}\right|;
\end{equation}
the sum $\sum_{i=1}^D (-1)^{\langle\Delta(i), \Delta(\Pi(k)) + \Delta(\Pi(j))\rangle}$ is $0$ unless
the vectors $\Delta(\Pi(j))$ and $\Delta(\Pi(k))$ agree in the first $\log_2 D$ entries, otherwise
varying $i\le D$ we can always find two terms in the sum that cancel each other by flipping
the parity of $i$ in the first slot where the vectors $\Delta(\Pi(j))$ and  $\Delta(\Pi(k))$ differ. So for each $j$, there are at most
$\frac{N}{D}$ possible $k$-s such that  $ |B_{j,k}| $ is non-zero; moreover, as the sum $\sum_{i=1}^D (-1)^{\langle\Delta(i), \Delta(\Pi(k)) + \Delta(\Pi(j))\rangle}$ is at most $D$ in absolute value, we get
\begin{equation}
\label{eq:affd_B_frobenious_bound}
   \|B\|_F^2 = \sum_{j=1}^N\sum_{k=1}^N|B_{j,k}|^2 
   \le N \frac{t_\infty^4}{D^2} \frac{N}{D} \frac{D^2}{N^2}\le \frac{t_\infty^4}{D};
\end{equation}
note that a bound on the spectral norm $\|B\|_S$ is trivial from~\eqref{eq:affd_B_j_k_better_bound}
as we can just take the maximum of the absolute values of the $|B_{j,k}|$ which is bounded by $\frac{t_\infty^2}{N}$.
Given the stronger bound for $\|B\|_S$ and that $\varepsilon\ll 1$, the minimum in the exponential in~\eqref{eq:affd_hanson_wright_coarse}
is achieved by the term involving $\|B\|_F^2$ and we thus obtain:
\begin{equation}
\label{eq:affd_hanson_wright_refined}
    P(|T_2|\ge \varepsilon)\le \exp\left(-C \varepsilon^2\frac{D}{t_\infty^4}\right).
\end{equation}

\textbf{Step 5: Picking up constants.} Conditional on $E_{good}$ and on $\sqrt{T_1} \ge\frac{1}{2}$
we have:
\begin{equation}
\label{eq:triangle_inequality}
    |\sqrt{T_1 + T_2} - 1| \le |\sqrt{T_1} - 1| + |\sqrt{T_1+T_2}-\sqrt{T_1}| \le |\sqrt{T_1} - 1| + 2|T_2|.
\end{equation}
We can bound the right hand side of~\eqref{eq:triangle_inequality} by $3\varepsilon$ if $E_{good}$ and the
concentration inequalities for $\sqrt{T_1}$  and $T_2$ hold. Thus, by decreasing the constant $C$ by a factor $9$,
we have
\begin{equation}
\label{eq:affd_concentration}
   P\left( |\sqrt{T_1 + T_2} - 1| \ge \varepsilon\right) \le \delta,
\end{equation}
where 
\begin{equation}
\label{eq:affd_concentration_delta_1}
   \delta = 2N\exp\left(-\frac{t^2_\infty}{2}\right) + \exp\left(-C\frac{N}{t_\infty^2}\varepsilon^2\right) 
   \exp\left(-C \varepsilon^2\frac{D}{t_\infty^4}\right);
\end{equation}
having fixed a small $\delta_1$, if we set $t_\infty=\sqrt{2\log\frac{2N}{\delta_1}}$,
we get
\begin{equation}
\label{eq:affd_concentration_delta_2}
   \delta \le\delta_1 + \exp\left(-C\varepsilon^2\frac{N}{2\log\frac{2N}{\delta_1}}\right) +
   \exp\left(-C\varepsilon^2\frac{D}{4\log^2\frac{2N}{\delta_1}}\right).
\end{equation}
\end{proof}

\paragraph{Comparison with~\cite{choromanski2016fast}} It seems plausible that \textbf{Step 4} could be
carried out with arguments similar to those of~\cite[Lem.~16, Lem.~17]{choromanski2016fast} by analyzing the 
chromatic number of the $P$-system associated with $H$; however we think the method that uses the Hanson-Wright
inequality is more simple for this specific case.

\clearpage

\newpage
\section*{NeurIPS Paper Checklist}

\begin{enumerate}

\item {\bf Claims}
    \item[] Question: Do the main claims made in the abstract and introduction accurately reflect the paper's contributions and scope?
    \item[] Answer: \answerYes{} 
    \item[] Justification: the theoretical justification is provided in Sec.~\ref{sec:design_space} and Sec.~\ref{sec:applications}; the experimental evidence is provided in Sec.~\ref{sec:exp_design_choices}.
    \item[] Guidelines:
    \begin{itemize}
        \item The answer NA means that the abstract and introduction do not include the claims made in the paper.
        \item The abstract and/or introduction should clearly state the claims made, including the contributions made in the paper and important assumptions and limitations. A No or NA answer to this question will not be perceived well by the reviewers. 
        \item The claims made should match theoretical and experimental results, and reflect how much the results can be expected to generalize to other settings. 
        \item It is fine to include aspirational goals as motivation as long as it is clear that these goals are not attained by the paper. 
    \end{itemize}

\item {\bf Limitations}
    \item[] Question: Does the paper discuss the limitations of the work performed by the authors?
    \item[] Answer: \answerYes{} 
    \item[] Justification: Limitations are discussed in Section~\ref{sec:limitations}.
    \item[] Guidelines:
    \begin{itemize}
        \item The answer NA means that the paper has no limitation while the answer No means that the paper has limitations, but those are not discussed in the paper. 
        \item The authors are encouraged to create a separate "Limitations" section in their paper.
        \item The paper should point out any strong assumptions and how robust the results are to violations of these assumptions (e.g., independence assumptions, noiseless settings, model well-specification, asymptotic approximations only holding locally). The authors should reflect on how these assumptions might be violated in practice and what the implications would be.
        \item The authors should reflect on the scope of the claims made, e.g., if the approach was only tested on a few datasets or with a few runs. In general, empirical results often depend on implicit assumptions, which should be articulated.
        \item The authors should reflect on the factors that influence the performance of the approach. For example, a facial recognition algorithm may perform poorly when image resolution is low or images are taken in low lighting. Or a speech-to-text system might not be used reliably to provide closed captions for online lectures because it fails to handle technical jargon.
        \item The authors should discuss the computational efficiency of the proposed algorithms and how they scale with dataset size.
        \item If applicable, the authors should discuss possible limitations of their approach to address problems of privacy and fairness.
        \item While the authors might fear that complete honesty about limitations might be used by reviewers as grounds for rejection, a worse outcome might be that reviewers discover limitations that aren't acknowledged in the paper. The authors should use their best judgment and recognize that individual actions in favor of transparency play an important role in developing norms that preserve the integrity of the community. Reviewers will be specifically instructed to not penalize honesty concerning limitations.
    \end{itemize}

\item {\bf Theory Assumptions and Proofs}
    \item[] Question: For each theoretical result, does the paper provide the full set of assumptions and a complete (and correct) proof?
    \item[] Answer: \answerYes{} 
    \item[] Justification: Theorems are stated with the full assumptions in the paper; proofs are provided in Appendix~\ref{appx:theory}. As the proofs can be
    lengthy, the main proof ingredients (e.g.~concentration for quadratic forms, analysis via explicit sketching) are explained in the paper.
    \item[] Guidelines:
    \begin{itemize}
        \item The answer NA means that the paper does not include theoretical results. 
        \item All the theorems, formulas, and proofs in the paper should be numbered and cross-referenced.
        \item All assumptions should be clearly stated or referenced in the statement of any theorems.
        \item The proofs can either appear in the main paper or the supplemental material, but if they appear in the supplemental material, the authors are encouraged to provide a short proof sketch to provide intuition. 
        \item Inversely, any informal proof provided in the core of the paper should be complemented by formal proofs provided in appendix or supplemental material.
        \item Theorems and Lemmas that the proof relies upon should be properly referenced. 
    \end{itemize}

    \item {\bf Experimental Result Reproducibility}
    \item[] Question: Does the paper fully disclose all the information needed to reproduce the main experimental results of the paper to the extent that it affects the main claims and/or conclusions of the paper (regardless of whether the code and data are provided or not)?
    \item[] Answer: \answerYes{} 
    \item[] Justification: The complete experiment setup is described in Appendix~\ref{appx:impl-details}. Python code of Jax implementations of the 
    algorithms used is also provided in Appendix~\ref{appx:impl-details}.
    \item[] Guidelines:
    \begin{itemize}
        \item The answer NA means that the paper does not include experiments.
        \item If the paper includes experiments, a No answer to this question will not be perceived well by the reviewers: Making the paper reproducible is important, regardless of whether the code and data are provided or not.
        \item If the contribution is a dataset and/or model, the authors should describe the steps taken to make their results reproducible or verifiable. 
        \item Depending on the contribution, reproducibility can be accomplished in various ways. For example, if the contribution is a novel architecture, describing the architecture fully might suffice, or if the contribution is a specific model and empirical evaluation, it may be necessary to either make it possible for others to replicate the model with the same dataset, or provide access to the model. In general. releasing code and data is often one good way to accomplish this, but reproducibility can also be provided via detailed instructions for how to replicate the results, access to a hosted model (e.g., in the case of a large language model), releasing of a model checkpoint, or other means that are appropriate to the research performed.
        \item While NeurIPS does not require releasing code, the conference does require all submissions to provide some reasonable avenue for reproducibility, which may depend on the nature of the contribution. For example
        \begin{enumerate}
            \item If the contribution is primarily a new algorithm, the paper should make it clear how to reproduce that algorithm.
            \item If the contribution is primarily a new model architecture, the paper should describe the architecture clearly and fully.
            \item If the contribution is a new model (e.g., a large language model), then there should either be a way to access this model for reproducing the results or a way to reproduce the model (e.g., with an open-source dataset or instructions for how to construct the dataset).
            \item We recognize that reproducibility may be tricky in some cases, in which case authors are welcome to describe the particular way they provide for reproducibility. In the case of closed-source models, it may be that access to the model is limited in some way (e.g., to registered users), but it should be possible for other researchers to have some path to reproducing or verifying the results.
        \end{enumerate}
    \end{itemize}

\item {\bf Open access to data and code}
    \item[] Question: Does the paper provide open access to the data and code, with sufficient instructions to faithfully reproduce the main experimental results, as described in supplemental material?
    \item[] Answer: \answerNo 
    \item[] Justification: All datasets and models used are public and can be obtained from HuggingFace. Python code to implement the
    proposed algorithms (in Jax) is provided in Appendix~\ref{appx:impl-details}.
    \item[] Guidelines:
    \begin{itemize}
        \item The answer NA means that paper does not include experiments requiring code.
        \item Please see the NeurIPS code and data submission guidelines (\url{https://nips.cc/public/guides/CodeSubmissionPolicy}) for more details.
        \item While we encourage the release of code and data, we understand that this might not be possible, so “No” is an acceptable answer. Papers cannot be rejected simply for not including code, unless this is central to the contribution (e.g., for a new open-source benchmark).
        \item The instructions should contain the exact command and environment needed to run to reproduce the results. See the NeurIPS code and data submission guidelines (\url{https://nips.cc/public/guides/CodeSubmissionPolicy}) for more details.
        \item The authors should provide instructions on data access and preparation, including how to access the raw data, preprocessed data, intermediate data, and generated data, etc.
        \item The authors should provide scripts to reproduce all experimental results for the new proposed method and baselines. If only a subset of experiments are reproducible, they should state which ones are omitted from the script and why.
        \item At submission time, to preserve anonymity, the authors should release anonymized versions (if applicable).
        \item Providing as much information as possible in supplemental material (appended to the paper) is recommended, but including URLs to data and code is permitted.
    \end{itemize}

\item {\bf Experimental Setting/Details}
    \item[] Question: Does the paper specify all the training and test details (e.g., data splits, hyperparameters, how they were chosen, type of optimizer, etc.) necessary to understand the results?
    \item[] Answer: \answerYes{} 
    \item[] Justification: The main paper
    provides the core ideas for each experimental setup and full details to reproduce the results are available in Appendix~\ref{appx:impl-details}. 
    \item[] Guidelines:
    \begin{itemize}
        \item The answer NA means that the paper does not include experiments.
        \item The experimental setting should be presented in the core of the paper to a level of detail that is necessary to appreciate the results and make sense of them.
        \item The full details can be provided either with the code, in appendix, or as supplemental material.
    \end{itemize}

\item {\bf Experiment Statistical Significance}
    \item[] Question: Does the paper report error bars suitably and correctly defined or other appropriate information about the statistical significance of the experiments?
    \item[] Answer: \answerNo{} 
    \item[] Justification: Experiments in Sec.~\ref{sec:exp_intrinsic_dimension} require multiple runs to
    check the consistency and accuracy of our estimation method for the intrinsic dimension. As the target quantity is
    discrete, we have opted to report the values across the random seeds in Appendix~\ref{appx:add-experiments} instead
    of using error bars. For wall time estimates we checked that on different runs the relative errors stay withing 10\%, while estimates did not change for the peak memory usage (discussion about using the Jax profiler in~Appendix~\ref{appx:impl-details}).
    \item[] Guidelines:
    \begin{itemize}
        \item The answer NA means that the paper does not include experiments.
        \item The authors should answer "Yes" if the results are accompanied by error bars, confidence intervals, or statistical significance tests, at least for the experiments that support the main claims of the paper.
        \item The factors of variability that the error bars are capturing should be clearly stated (for example, train/test split, initialization, random drawing of some parameter, or overall run with given experimental conditions).
        \item The method for calculating the error bars should be explained (closed form formula, call to a library function, bootstrap, etc.)
        \item The assumptions made should be given (e.g., Normally distributed errors).
        \item It should be clear whether the error bar is the standard deviation or the standard error of the mean.
        \item It is OK to report 1-sigma error bars, but one should state it. The authors should preferably report a 2-sigma error bar than state that they have a 96\% CI, if the hypothesis of Normality of errors is not verified.
        \item For asymmetric distributions, the authors should be careful not to show in tables or figures symmetric error bars that would yield results that are out of range (e.g. negative error rates).
        \item If error bars are reported in tables or plots, The authors should explain in the text how they were calculated and reference the corresponding figures or tables in the text.
    \end{itemize}

\item {\bf Experiments Compute Resources}
    \item[] Question: For each experiment, does the paper provide sufficient information on the computer resources (type of compute workers, memory, time of execution) needed to reproduce the experiments?
    \item[] Answer: \answerYes{} 
    \item[] Justification: See Appendix~\ref{appx:impl-details}.
    \item[] Guidelines:
    \begin{itemize}
        \item The answer NA means that the paper does not include experiments.
        \item The paper should indicate the type of compute workers CPU or GPU, internal cluster, or cloud provider, including relevant memory and storage.
        \item The paper should provide the amount of compute required for each of the individual experimental runs as well as estimate the total compute. 
        \item The paper should disclose whether the full research project required more compute than the experiments reported in the paper (e.g., preliminary or failed experiments that didn't make it into the paper). 
    \end{itemize}
    
\item {\bf Code Of Ethics}
    \item[] Question: Does the research conducted in the paper conform, in every respect, with the NeurIPS Code of Ethics \url{https://neurips.cc/public/EthicsGuidelines}?
    \item[] Answer: \answerYes{} 
    \item[] Justification: We followed the guidelines.
    \item[] Guidelines:
    \begin{itemize}
        \item The answer NA means that the authors have not reviewed the NeurIPS Code of Ethics.
        \item If the authors answer No, they should explain the special circumstances that require a deviation from the Code of Ethics.
        \item The authors should make sure to preserve anonymity (e.g., if there is a special consideration due to laws or regulations in their jurisdiction).
    \end{itemize}

\item {\bf Broader Impacts}
    \item[] Question: Does the paper discuss both potential positive societal impacts and negative societal impacts of the work performed?
    \item[] Answer: \answerNA{} 
    \item[] Justification: We do not foresee societal impacts for this work.
    \item[] Guidelines:
    \begin{itemize}
        \item The answer NA means that there is no societal impact of the work performed.
        \item If the authors answer NA or No, they should explain why their work has no societal impact or why the paper does not address societal impact.
        \item Examples of negative societal impacts include potential malicious or unintended uses (e.g., disinformation, generating fake profiles, surveillance), fairness considerations (e.g., deployment of technologies that could make decisions that unfairly impact specific groups), privacy considerations, and security considerations.
        \item The conference expects that many papers will be foundational research and not tied to particular applications, let alone deployments. However, if there is a direct path to any negative applications, the authors should point it out. For example, it is legitimate to point out that an improvement in the quality of generative models could be used to generate deepfakes for disinformation. On the other hand, it is not needed to point out that a generic algorithm for optimizing neural networks could enable people to train models that generate Deepfakes faster.
        \item The authors should consider possible harms that could arise when the technology is being used as intended and functioning correctly, harms that could arise when the technology is being used as intended but gives incorrect results, and harms following from (intentional or unintentional) misuse of the technology.
        \item If there are negative societal impacts, the authors could also discuss possible mitigation strategies (e.g., gated release of models, providing defenses in addition to attacks, mechanisms for monitoring misuse, mechanisms to monitor how a system learns from feedback over time, improving the efficiency and accessibility of ML).
    \end{itemize}
    
\item {\bf Safeguards}
    \item[] Question: Does the paper describe safeguards that have been put in place for responsible release of data or models that have a high risk for misuse (e.g., pretrained language models, image generators, or scraped datasets)?
    \item[] Answer: \answerNA{} 
    \item[] Justification: No releases of generative models or datasets.
    \item[] Guidelines:
    \begin{itemize}
        \item The answer NA means that the paper poses no such risks.
        \item Released models that have a high risk for misuse or dual-use should be released with necessary safeguards to allow for controlled use of the model, for example by requiring that users adhere to usage guidelines or restrictions to access the model or implementing safety filters. 
        \item Datasets that have been scraped from the Internet could pose safety risks. The authors should describe how they avoided releasing unsafe images.
        \item We recognize that providing effective safeguards is challenging, and many papers do not require this, but we encourage authors to take this into account and make a best faith effort.
    \end{itemize}

\item {\bf Licenses for existing assets}
    \item[] Question: Are the creators or original owners of assets (e.g., code, data, models), used in the paper, properly credited and are the license and terms of use explicitly mentioned and properly respected?
    \item[] Answer: \answerYes{} 
    \item[] Justification: For the experiments in Sec.~\ref{sec:layer_selection_limitations} we used the setup of~\cite{influence-theory23}
    which has been cited.
    \item[] Guidelines:
    \begin{itemize}
        \item The answer NA means that the paper does not use existing assets.
        \item The authors should cite the original paper that produced the code package or dataset.
        \item The authors should state which version of the asset is used and, if possible, include a URL.
        \item The name of the license (e.g., CC-BY 4.0) should be included for each asset.
        \item For scraped data from a particular source (e.g., website), the copyright and terms of service of that source should be provided.
        \item If assets are released, the license, copyright information, and terms of use in the package should be provided. For popular datasets, \url{paperswithcode.com/datasets} has curated licenses for some datasets. Their licensing guide can help determine the license of a dataset.
        \item For existing datasets that are re-packaged, both the original license and the license of the derived asset (if it has changed) should be provided.
        \item If this information is not available online, the authors are encouraged to reach out to the asset's creators.
    \end{itemize}

\item {\bf New Assets}
    \item[] Question: Are new assets introduced in the paper well documented and is the documentation provided alongside the assets?
    \item[] Answer: \answerNA{} 
    \item[] Justification: No new assets are released.
    \item[] Guidelines:
    \begin{itemize}
        \item The answer NA means that the paper does not release new assets.
        \item Researchers should communicate the details of the dataset/code/model as part of their submissions via structured templates. This includes details about training, license, limitations, etc. 
        \item The paper should discuss whether and how consent was obtained from people whose asset is used.
        \item At submission time, remember to anonymize your assets (if applicable). You can either create an anonymized URL or include an anonymized zip file.
    \end{itemize}

\item {\bf Crowdsourcing and Research with Human Subjects}
    \item[] Question: For crowdsourcing experiments and research with human subjects, does the paper include the full text of instructions given to participants and screenshots, if applicable, as well as details about compensation (if any)? 
    \item[] Answer: \answerNA{} 
    \item[] Justification: No crowdsourcing, no human subjects.
    \item[] Guidelines:
    \begin{itemize}
        \item The answer NA means that the paper does not involve crowdsourcing nor research with human subjects.
        \item Including this information in the supplemental material is fine, but if the main contribution of the paper involves human subjects, then as much detail as possible should be included in the main paper. 
        \item According to the NeurIPS Code of Ethics, workers involved in data collection, curation, or other labor should be paid at least the minimum wage in the country of the data collector. 
    \end{itemize}

\item {\bf Institutional Review Board (IRB) Approvals or Equivalent for Research with Human Subjects}
    \item[] Question: Does the paper describe potential risks incurred by study participants, whether such risks were disclosed to the subjects, and whether Institutional Review Board (IRB) approvals (or an equivalent approval/review based on the requirements of your country or institution) were obtained?
    \item[] Answer: \answerNA{} 
    \item[] Justification: No crowdsourcing, no human subjects.
    \item[] Guidelines:
    \begin{itemize}
        \item The answer NA means that the paper does not involve crowdsourcing nor research with human subjects.
        \item Depending on the country in which research is conducted, IRB approval (or equivalent) may be required for any human subjects research. If you obtained IRB approval, you should clearly state this in the paper. 
        \item We recognize that the procedures for this may vary significantly between institutions and locations, and we expect authors to adhere to the NeurIPS Code of Ethics and the guidelines for their institution. 
        \item For initial submissions, do not include any information that would break anonymity (if applicable), such as the institution conducting the review.
    \end{itemize}

\end{enumerate}

\end{document}